\providecommand{\tabularnewline}{\\}
\theoremstyle{plain}
\newtheorem{prop}{\protect\propositionname}
\theoremstyle{remark}
\newtheorem{rem}{\protect\remarkname}
\theoremstyle{definition}
\newtheorem{defn}{\protect\definitionname}
\theoremstyle{plain}
\newtheorem{thm}{\protect\theoremname}
\theoremstyle{definition}
\newtheorem{problem}{\protect\problemname}
\def\eqref#1{equation~\ref{#1}}
\def\1{\bm{1}}
\def\rb{{\textnormal{b}}}
\def\rf{{\textnormal{f}}}
\def\rt{{\textnormal{t}}}
\DeclareMathAlphabet{\mathsfit}{\encodingdefault}{\sfdefault}{m}{sl}
\SetMathAlphabet{\mathsfit}{bold}{\encodingdefault}{\sfdefault}{bx}{n}
\newcommand{\E}{\mathbb{E}}
\newcommand{\R}{\mathbb{R}}
\newcommand{\softmax}{\mathrm{softmax}}
\DeclareMathOperator*{\argmax}{arg\,max}
\DeclareMathOperator*{\argmin}{arg\,min}
\DeclareMathOperator{\sign}{sign}
\title{Experimental Design for Overparameterized Learning with Application to Single Shot Deep Active Learning}
\author{%
  Neta Shoham %\thanks{Use footnote for providing further information
    %about author (webpage, alternative address)---\emph{not} for acknowledging
    %funding agencies.} 
\\	
   School of Mathematical Sciences\\
   Tel Aviv University \\
   Tel Aviv, Israel \\
  \texttt{shohamne@gmail.com} \\
  % examples of more authors
   \And
   Haim Avron \\
   School of Mathematical Sciences\\
   Tel Aviv University \\
   Tel Aviv, Israel \\
   \texttt{haimav@tauex.tau.ac.il} \\
  % \AND
  % Coauthor \\
  % Affiliation \\
  % Address \\
  % \texttt{email} \\
  % \And
  % Coauthor \\
  % Affiliation \\
  % Address \\
  % \texttt{email} \\
  % \And
  % Coauthor \\
  % Affiliation \\
  % Address \\
  % \texttt{email} \\
}
\author{Neta Shoham, Haim Avron
%\thanks{ Use footnote for providing further information
%about author (webpage, alternative address)---\emph{not} for acknowledging
%funding agencies.  Funding acknowledgements go at the end of the paper.} 
\\
Tel Aviv University \\
Tel Aviv, Israel \\
\texttt{shohamne,haimav@tauex.tau.ac.il} \\
}
\providecommand{\definitionname}{Definition}
\providecommand{\problemname}{Problem}
\providecommand{\propositionname}{Proposition}
\providecommand{\remarkname}{Remark}
\providecommand{\theoremname}{Theorem}
\begin{document}
\global\long\def\R{\mathbb{R}}%

\global\long\def\C{\mathbb{C}}%

\global\long\def\N{\mathbb{N}}%

\global\long\def\e{{\mathbf{e}}}%

\global\long\def\et#1{{\e(#1)}}%

\global\long\def\ef{{\mathbf{\et{\cdot}}}}%

\global\long\def\x{{\mathbf{x}}}%

\global\long\def\xt#1{{\x(#1)}}%

\global\long\def\xf{{\mathbf{\xt{\cdot}}}}%

\global\long\def\d{{\mathbf{d}}}%

\global\long\def\w{{\mathbf{w}}}%

\global\long\def\b{{\mathbf{b}}}%

\global\long\def\u{{\mathbf{u}}}%

\global\long\def\y{{\mathbf{y}}}%

\global\long\def\n{{\mathbf{n}}}%

\global\long\def\k{{\mathbf{k}}}%

\global\long\def\yt#1{{\y(#1)}}%

\global\long\def\yf{{\mathbf{\yt{\cdot}}}}%

\global\long\def\z{{\mathbf{z}}}%

\global\long\def\v{{\mathbf{v}}}%

\global\long\def\h{{\mathbf{h}}}%

\global\long\def\s{{\mathbf{s}}}%

\global\long\def\c{{\mathbf{c}}}%

\global\long\def\p{{\mathbf{p}}}%

\global\long\def\f{{\mathbf{f}}}%

\global\long\def\rb{{\mathbf{r}}}%

\global\long\def\rt#1{{\rb(#1)}}%

\global\long\def\rf{{\mathbf{\rt{\cdot}}}}%

\global\long\def\mat#1{{\ensuremath{\bm{\mathrm{#1}}}}}%

\global\long\def\vec#1{{\ensuremath{\bm{\mathrm{#1}}}}}%

\global\long\def\matN{\ensuremath{{\bm{\mathrm{N}}}}}%

\global\long\def\matX{\ensuremath{{\bm{\mathrm{X}}}}}%

\global\long\def\X{\ensuremath{{\bm{\mathrm{X}}}}}%

\global\long\def\matK{\ensuremath{{\bm{\mathrm{K}}}}}%

\global\long\def\K{\ensuremath{{\bm{\mathrm{K}}}}}%

\global\long\def\matA{\ensuremath{{\bm{\mathrm{A}}}}}%

\global\long\def\A{\ensuremath{{\bm{\mathrm{A}}}}}%

\global\long\def\matB{\ensuremath{{\bm{\mathrm{B}}}}}%

\global\long\def\B{\ensuremath{{\bm{\mathrm{B}}}}}%

\global\long\def\matC{\ensuremath{{\bm{\mathrm{C}}}}}%

\global\long\def\C{\ensuremath{{\bm{\mathrm{C}}}}}%

\global\long\def\matD{\ensuremath{{\bm{\mathrm{D}}}}}%

\global\long\def\D{\ensuremath{{\bm{\mathrm{D}}}}}%

\global\long\def\matE{\ensuremath{{\bm{\mathrm{E}}}}}%

\global\long\def\E{\ensuremath{{\bm{\mathrm{E}}}}}%

\global\long\def\matF{\ensuremath{{\bm{\mathrm{F}}}}}%

\global\long\def\F{\ensuremath{{\bm{\mathrm{F}}}}}%

\global\long\def\matH{\ensuremath{{\bm{\mathrm{H}}}}}%

\global\long\def\H{\ensuremath{{\bm{\mathrm{H}}}}}%

\global\long\def\matP{\ensuremath{{\bm{\mathrm{P}}}}}%

\global\long\def\P{\ensuremath{{\bm{\mathrm{P}}}}}%

\global\long\def\matU{\ensuremath{{\bm{\mathrm{U}}}}}%

\global\long\def\matV{\ensuremath{{\bm{\mathrm{V}}}}}%

\global\long\def\V{\ensuremath{{\bm{\mathrm{V}}}}}%

\global\long\def\matW{\ensuremath{{\bm{\mathrm{W}}}}}%

\global\long\def\matM{\ensuremath{{\bm{\mathrm{M}}}}}%

\global\long\def\M{\ensuremath{{\bm{\mathrm{M}}}}}%

\global\long\def\calH{{\cal H}}%

\global\long\def\calY{{\cal Y}}%

\global\long\def\calP{{\cal P}}%

\global\long\def\calX{{\cal X}}%

\global\long\def\calS{{\cal S}}%

\global\long\def\calT{{\cal T}}%

\global\long\def\Normal{{\cal \mathcal{N}}}%

\global\long\def\matQ{{\mat Q}}%

\global\long\def\Q{{\mat Q}}%

\global\long\def\matR{\mat R}%

\global\long\def\matS{\mat S}%

\global\long\def\matY{\mat Y}%

\global\long\def\matI{\mat I}%

\global\long\def\I{\mat I}%

\global\long\def\matJ{\mat J}%

\global\long\def\matZ{\mat Z}%

\global\long\def\Z{\mat Z}%

\global\long\def\matW{{\mat W}}%

\global\long\def\W{{\mat W}}%

\global\long\def\matL{\mat L}%

\global\long\def\S#1{{\mathbb{S}_{N}[#1]}}%

\global\long\def\IS#1{{\mathbb{S}_{N}^{-1!}[#1]}}%

\global\long\def\PN{\mathbb{P}_{N}}%

\global\long\def\TNormS#1{\|#1\|_{2}^{2}}%

\global\long\def\ITNormS#1{\|#1\|_{2}^{-2}}%

\global\long\def\ONorm#1{\|#1\Vert_{1}}%

\global\long\def\TNorm#1{\|#1\|_{2}}%

\global\long\def\InfNorm#1{\|#1\|_{\infty}}%

\global\long\def\FNorm#1{\|#1\|_{F}}%

\global\long\def\FNormS#1{\|#1\|_{F}^{2}}%

\global\long\def\UNorm#1{\|#1\|_{\matU}}%

\global\long\def\UNormS#1{\|#1\|_{\matU}^{2}}%

\global\long\def\UINormS#1{\|#1\|_{\matU^{-1}}^{2}}%

\global\long\def\ANorm#1{\|#1\|_{\matA}}%

\global\long\def\BNorm#1{\|#1\|_{\mat B}}%

\global\long\def\ANormS#1{\|#1\|_{\matA}^{2}}%

\global\long\def\AINormS#1{\|#1\|_{\matA^{-1}}^{2}}%

\global\long\def\T{\textsc{T}}%

\global\long\def\conj{\textsc{*}}%

\global\long\def\pinv{\textsc{+}}%

\global\long\def\Expect#1{{\mathbb{E}}\left[#1\right]}%

\global\long\def\ExpectC#1#2{{\mathbb{E}}_{#1}\left[#2\right]}%

\global\long\def\VarC#1#2{{\mathbb{\mathrm{Var}}}_{#1}\left[#2\right]}%

\global\long\def\dotprod#1#2#3{(#1,#2)_{#3}}%

\global\long\def\dotprodN#1#2{(#1,#2)_{{\cal N}}}%

\global\long\def\dotprodH#1#2{\langle#1,#2\rangle_{{\cal {\cal H}}}}%

\global\long\def\dotprodsqr#1#2#3{(#1,#2)_{#3}^{2}}%

\global\long\def\Trace#1{{\bf Tr}\left(#1\right)}%

\global\long\def\nnz#1{{\bf nnz}\left(#1\right)}%

\global\long\def\MSE#1{{\bf MSE}\left(#1\right)}%

\global\long\def\WMSE#1{{\bf WMSE}\left(#1\right)}%

\global\long\def\EWMSE#1{{\bf EWMSE}\left(#1\right)}%

\global\long\def\nicehalf{\nicefrac{1}{2}}%

\global\long\def\argmin{\operatornamewithlimits{argmin}}%

\global\long\def\argmax{\operatornamewithlimits{argmax}}%

\global\long\def\norm#1{\Vert#1\Vert}%

\global\long\def\sign{\operatorname{sign}}%

\global\long\def\diag{\operatorname{diag}}%

\global\long\def\VOPT{\operatorname\{VOPT\}}%

\global\long\def\dist{\operatorname{dist}}%

\global\long\def\diag{\operatorname{diag}}%

\global\long\def\sp{\operatorname{span}}%

\global\long\def\onehot{\operatorname{onehot}}%

\global\long\def\softmax{\operatorname{softmax}}%

\newcommand*\diff{\mathop{}\!\mathrm{d}} 

\global\long\def\dd{\diff}%

\global\long\def\whatlambda{\w_{\lambda}}%

\global\long\def\Plambda{\mat P_{\lambda}}%

\global\long\def\Pperplambda{\left(\mat I-\Plambda\right)}%

\global\long\def\Mlambda{\matM_{\lambda}}%

\global\long\def\Mlambdafull{\matM+\lambda\matI}%

\global\long\def\Mlambdafullinv{\left(\matM+\lambda\matI\right)^{-1}}%

\global\long\def\Mdaggerlambda{{\mathbf{\mat M_{\lambda}^{+}}}}%

\global\long\def\Xdaggerlambda{{\mathbf{\mat X_{\lambda}^{+}}}}%

\global\long\def\XT{{\mathbf{X}^{\T}}}%

\global\long\def\XXT{{\matX\mat X^{\T}}}%

\global\long\def\XTX{{\matX^{\T}\mat X}}%

\global\long\def\VT{{\mathbf{V}^{\T}}}%

\global\long\def\VVT{{\matV\mat V^{\T}}}%

\global\long\def\VTV{{\matV^{\T}\mat V}}%

\global\long\def\varphibar#1#2{{\bar{\varphi}_{#1,#2}}}%

\global\long\def\varphilambda{{\varphi_{\lambda}}}%

\twocolumn[ 
\begin{@twocolumnfalse}
\maketitle
\begin{abstract}
The impressive performance exhibited by modern machine learning models hinges on the ability to train such models on a very large amounts of labeled data. However, since access to large volumes of labeled data is often limited or expensive, it is desirable to alleviate this bottleneck by carefully curating the training set. Optimal experimental design is a well-established paradigm for selecting data point to be labeled so to maximally inform the learning process. Unfortunately, classical theory on optimal experimental design focuses on selecting examples in order to learn underparameterized (and thus, non-interpolative) models, while modern machine learning models such as deep neural networks are overparameterized, and oftentimes are trained to be interpolative. As such, classical experimental design methods are not applicable in many modern learning setups. Indeed, the predictive performance of underparameterized models tends to be variance dominated, so classical experimental design focuses on variance reduction, while the predictive performance of overparameterized models can also be, as is shown in this paper, bias dominated or of mixed nature. In this paper we propose a design strategy that is well suited for overparameterized regression and interpolation, and we demonstrate the applicability of our method in the context of deep learning by proposing a new algorithm for single shot deep active learning.
\end{abstract}
\vspace{0.35cm}
\end{@twocolumnfalse}
]

\section{Introduction}

The impressive performance exhibited by modern machine learning models
hinges on the ability to train the aforementioned models on very large
amounts of labeled data. In practice, in many real world scenarios,
even when raw data exists aplenty, acquiring labels might prove challenging
and/or expensive. This severely limits the ability to deploy machine
learning capabilities in real world applications. This bottleneck
has been recognized early on, and methods to alleviate it have been
suggested.  Most relevant for our work is the large body of research
on\emph{ active learning} or\emph{ optimal experimental design}, which
aims at selecting data point to be labeled so to maximally inform
the learning process. Disappointedly, active learning techniques seem
to deliver mostly lukewarm benefits in the context of deep learning.

One possible reason why experimental design has so far failed to make
an impact in the context of deep learning is that such models are
\emph{overparameterized}, and oftentimes are trained to be\emph{ interpolative}~\cite{zhang2016understanding},
i.e., they are trained so that a perfect fit of the training data
is found. This raises a conundrum: the classical perspective on statistical
learning theory is that overfitting should be avoided since there
is a tradeoff between the fit and complexity of the model. This conundrum
is exemplified by the \emph{double descent phenomena}~\cite{belkin2019two,bartlett2019benign},
namely when fixing the model size and increasing the amount of training
data, the predictive performance initially goes down, and then starts
to go up, exploding when the amount of training data approaches the
model complexity, and then starts to descend again. This runs counter
to statistical intuition which says that more data implies better
learning. Indeed, when using interpolative models, more data can hurt~\cite{nakkiran2019deep}!
This phenomena is exemplified in the curve labeled ``Random Selection''
in Figure~\ref{fig:opening}. Figure~\ref{fig:opening} explores
the predictive performance of various designs when learning a linear
regression model and varying the amount of training data with responses.

\begin{figure}
\centering{}%
\begin{minipage}[t]{0.95\columnwidth}%
\begin{center}
\includegraphics[width=1\textwidth]{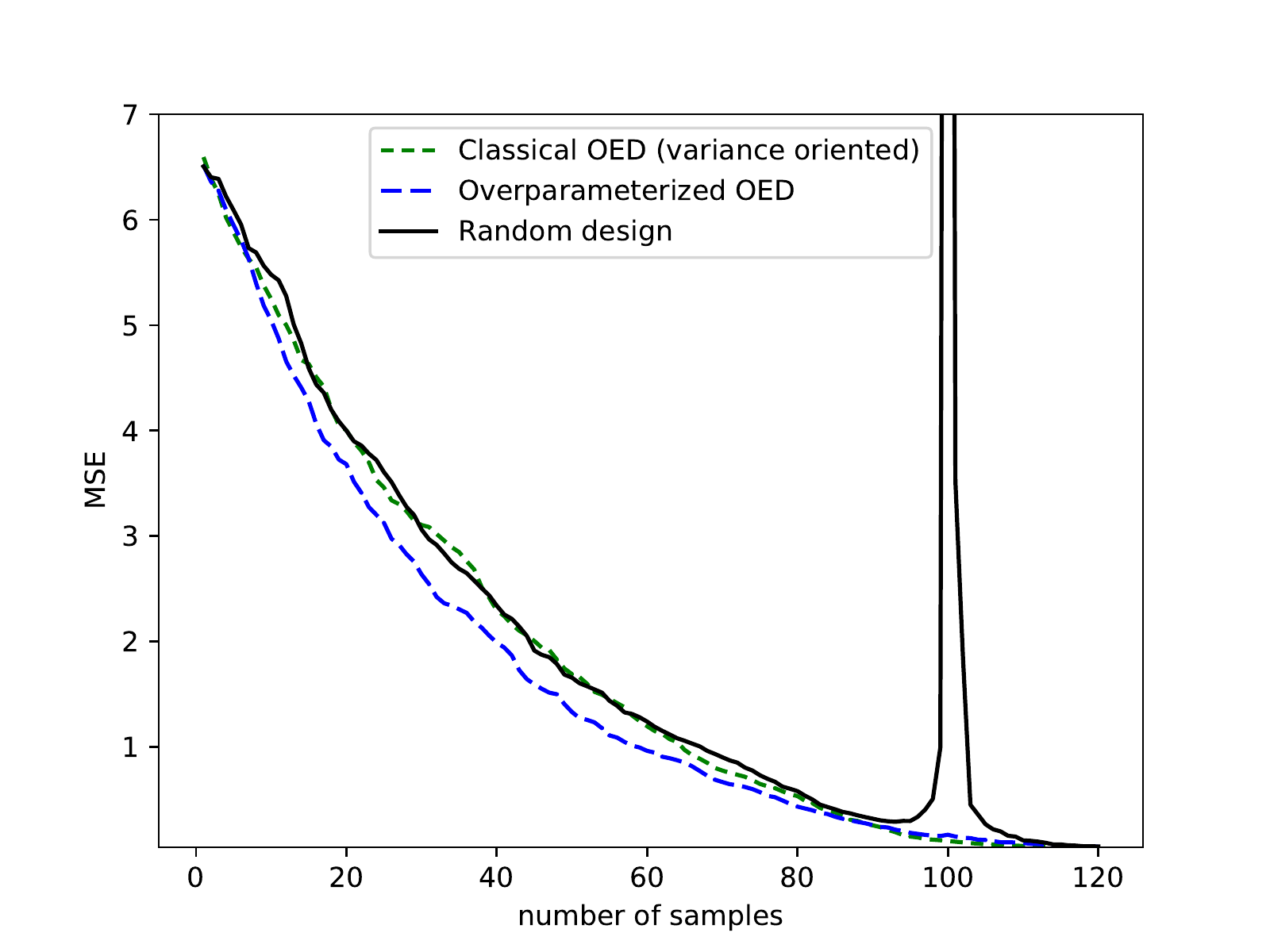}
\par\end{center}%
\end{minipage}\caption{\label{fig:opening}MSE of a minimum norm linear interpolative model.
We use synthetic data of dimension 100. The full description is in
Appendix \ref{sec:Experiment-setup-of}.}
\end{figure}

The fact that more data can hurt further motivates experimental design
in the interpolative regime. Presumably, if data is carefully curated,
more data should never hurt. Unfortunately, classical optimal experimental
design focuses on the underparameterized (and thus, non-interpolative)
case. As such, the theory reported in the literature is often not
applicable in the interpolative regime. As our analysis shows (see
Section~\ref{sec:overparmeterized-oed}), the prediction error of
interpolative models can either be\emph{ bias dominated} (the first
descent phase, i.e., when training size is very small compared to
the number of parameters),\emph{ variance dominated} (near equality
between number of training examples and number of parameters) or of
\emph{mixed nature}. However, properly trained underparameterized
models tend to have prediction error which is variance dominated,
so classical experimental design focuses on variance reduction. As
such, naively using classical optimality criteria, such as V-optimality
(the one most relevant for generalization error) or others, in the
context of interpolation, tends to produce poor results when prediction
error is bias dominated or of mixed nature. This is exemplified in
the curve labeled ``Classical OED'' in Figure~\ref{fig:opening}.

The goal of this paper is to understand these regimes, and to propose
an experimental design strategy that is well suited for overparameterized
models. Like many recent work that attempt to understand the double
descent phenomena by analyzing underdetermined linear regression,
we too use a simple linear regression model in our analysis of experimental
design in the overparameterized case (however, we also consider kernel
ridge regression, not only linear interpolative models). We believe
that understanding experimental design in the overparameterized linear
regression case is a prelude to designing effective design algorithms
for deep learning. Indeed, recent theoretical results showed a deep
connection between deep learning and kernel learning via the so-called
Neural Tangent Kernel \cite{jacot2018neural,arora2019exact,lee2019wide}.
Based on this connection, and as a proof-of-concept, we propose a
new algorithm for single shot deep active learning. 

Let us now summarize our contributions:
\begin{compactitem}
\item We analyze the prediction error of learning overparameterized linear
models for a given fixed design, revealing three possible regimes
that call for different design criteria: bias dominated, variance
dominated, and mixed nature. We also reveal an interesting connection
between overparameterized experimental design and the\emph{ column
subset selection problem}~\cite{boutsidis2008improved},\emph{ transductive
experimental design}~\cite{yu2006active}, and\emph{ coresets}~\cite{sener2017active}.
We also extend our approach to kernel ridge regression.
\item We propose a novel greedy algorithm for finding designs for overparameterized
linear models. As exemplified in the curve labeled ``Overparameterized
OED'', our algorithm is sometimes able to mitigate the double descent
phenomena, while still performing better than classical OED (though
no formal proof of this fact is provided). 
\item We show how our algorithm can also be applied for kernel ridge regression,
and report experiments which show that when the number of parameters
is in a sense infinite, our algorithm is able to find designs that
are better than state of the art. We propose a new algorithm for
single shot deep active learning, a scaracly treated problem so far,
and demonstrate its effectiveness on MNIST.
\end{compactitem}

\subsection{Related Work.}

The phenomena of benign overfitting and double descent was firstly
recognized in DNNs \cite{zhang2016understanding}, and later discussed
and analyzed in the context of linear models \cite{zhang2016understanding,belkin2018understand,belkin2018reconciling,belkin2019two,bartlett2019benign}.
Recently there is also a growing interest in the related phenomena
of ``more data can hurt'' \cite{nakkiran2019deep,nakkiran2019more,nakkiran2020optimal,loog2019minimizers}.
Other works discussed the need to consider zero or negative regularization
coefficient for large real life linear models \cite{kobak2019optimal}.

Experimental design is an well established paradigm in statistics,
extensively covered in the literature for the linear case \cite{pukelsheim2006optimal}
and the non linear case \cite{pronzato2013design}. Its application
to pool based active learning with batch acquisitions was explored
in \cite{yu2006active} for linear models and in \cite{hoi2006batch}
for logistic regression. It was also proposed in the context of deep
learning~\cite{sourati2018active}. Another related line of work
is recent work by Haber and Horesh on experimental design for ill-posed
inverse problems\cite{haber2008numerical,haber2012numerical,horesh2010optimal}.
Active learning in the context of overparameterized learning was explored
in \cite{karzand2020maximin}, however their approach differs from
ours significantly since it is based on artificially completing the
labels using a minimax approach.

In the context of \emph{Laplacian regularized Least Squares} (LapRLS),
which is a generalization of ridge regression, in \cite{gu2012selective}
it is shown rigorously that the criterion suggested in \cite{yu2006active}
is justified as a bound for both the bias and variance components
of the expected error. We farther show that this bound is in some
sense tight only if the parameter norm is one and the noise variance
equals the $l_{2}$ penalty coefficient. In addition we postulate
and show experimentally that in the overparameterized case using a
bias dominant criterion is preferable. Another case in which the bias
term does not vanish is when the model is misspecified. For linear
\cite{sugiyama2005active} and generalized linear \cite{bach2007active}
models this case has been tackled with reweighing of the loss function.

A popular modern approach for pool based active learning with batch
acquisition is coresets \cite{sener2017active,geifman2017deep,ash2019deep,pinsler2019bayesian}.
This approach has been used in the context of active learning for
DNNs.

\section{\label{sec:Preliminaries}Background: Underparameterized V-Optimal
Experimental Design}

First we recall the classical experimental design criterion of V-optimality
which applies in the underparameterized setting.

Consider a noisy linear response model $\text{\ensuremath{y=\x^{\T}\w+\epsilon}}$,
where $\epsilon\sim\Normal(0,\sigma^{2})$ and $\w\in\R^{d}$ and
assume we are given with some data points $\x_{1},\dots,\x_{n}$,
for which we obtained independent responses, $y_{i}=\x_{i}^{\T}\w+\epsilon_{i}$.
Consider the underparameterized case, i.e. $n\geq d$, and furthermore
assume that the set $\{\x_{1},\dots,\x_{n}\}$ contains at least $d$
independent vectors. The best linear unbiased estimator $\hat{\w}$
of $\w$ according to the Gauss-Markov theorem is given by: $\hat{\w}=\arg\min_{\w}\TNormS{\matX\w-\y}=\matX^{+}\y$
where $\matX\in\R^{n\times d}$ is a matrix whose rows are $\x_{1},\dots,\x_{n}$,
$\y=\left[y_{1}\dots y_{n}\right]^{\T}\in\R^{n}$ and $\matX^{\pinv}$
is the Moore-Pensrose pseudoinverse of $\matX$. It is well known
that in this case $\hat{\w}-\w$ is a normal random vector with zero
mean and covariance matrix $\sigma^{2}\mat M^{-1}$, where $\matM=\XTX$
is the Fisher information matrix. This implies that $\hat{y}(\x)-y(\x)$
is also a normal variable with zero mean and variance equal to $\sigma^{2}\x^{\T}\matM^{-1}\x$.

Assume also that data points ($\x$'s) are sampled independently from
some distribution $\rho.$ Now, we can further define the excess risk:
\[
R(\hat{\w})=\ExpectC{\x\sim\rho}{(\x^{\T}\w-\x^{\T}\hat{\w})^{2}}
\]
and calculate its expectation: 
\begin{align}
\ExpectC{\epsilon}{R(\hat{\w})} & =\ExpectC{\x\sim\rho}{\VarC{\epsilon}{y(\x)-\hat{y}(\x)}}\nonumber \\
 & =\ExpectC{\x\sim\rho}{\sigma^{2}\x^{\T}\matM^{-1}\x}\nonumber \\
 & =\Trace{\sigma^{2}\matM^{-1}\matC_{\rho}}\label{eq:risk_under}
\end{align}
where $\mat C_{\rho}$ is the uncentered second moment matrix of $\rho$:
\[
\matC_{\rho}\coloneqq\ExpectC{\x\sim\rho}{\x\x^{\T}}.
\]

Eq.~\eqref{eq:risk_under} motivates the so-called \emph{V-optimal
design} criterion: select the dataset $\x_{1},\dots,\x_{n}$ so that
$\varphi(\matM)\coloneqq\Trace{\matM^{-1}\matC_{\rho}}$ is minimized
(if we do not have access to $\matC_{\rho}$ then it is possible to
estimate it by drawing samples from $\rho$). In doing so, we are
trying to minimize the expected (with respect to the noise $\epsilon$)
average (with respect to the data $\x$) prediction variance, since
the risk is composed solely from it (due to the fact that the estimator
is unbiased). As we shall see, this is in contrast with the overparameterized
case, in which the estimator is biased.

V-optimality is only one instance of various statistical criteria
used in experimental design. In general experimental design, the focus
is on minimizing a preselected criteria $\varphi\left(\matM\right)$
\cite{pukelsheim2006optimal}. For example in D-optimal design, $\varphi(\matM)=\det(\matM^{-1})$
and in A-optimal design $\varphi(\matM)=\Trace{\matM^{-1}}$. However,
since minimizing the V-optimality criterion corresponds to minimizing
the risk, it is more appropriate when assessing the predictive performance
of machine learning models.

\section{\label{sec:overparmeterized-oed}Overparameterized Experimental Design
Criteria}

In this section we derive an expression for the risk in the overparameterized
case, i.e. like Eq.~\eqref{eq:risk_under} but also for the case
that $n\leq d$ (our expressions also hold for $n>d$). This, in turn,
leads to an experimental design criteria analogous to V-optimality,
but relevant for overparamterized modeling as well. We design a novel
algorithm based on this criteria in subsequent sections.

\subsection{Overparameterized Regression and Interpolation}

When $n\geq d$ there is a natural candidate for $\hat{\w}$: the
best unbiased linear estimator $\matX^{\pinv}\y$\footnote{In practice, when $n$ is only mildly bigger than $d$ it is usually
better to regularize the problem.}. However, when $d>n$ there is no longer a unique minimizer of $\TNormS{\matX\w-\y}$
as there is an infinite amount of interpolating $\w$'s, i.e. $\w$'s
such that $\matX\w=\y$ (the last statement makes the mild additional
assumption that $\matX$ has full row rank). One natural strategy
for dealing with the non-uniqueness is to consider the \emph{minimum
norm interpolator}:
\[
\hat{\w}\coloneqq\arg\min\TNormS{\w}\,\,\,\text{s.t.}\,\,\,\matX\w=\y
\]
 It is still the case that $\hat{\w}=\matX^{\pinv}\y$. Another option
for dealing with non-uniqueness of the minimizer is to add a ridge
term, i.e., add and additive penalty $\lambda\TNormS{\w}$. Let:
\[
\hat{\w}_{\lambda}\coloneqq\arg\min\TNormS{\matX\w-\y}+\lambda\TNormS{\w}
\]
One can show that
\begin{equation}
\hat{\w}_{\lambda}=\mat X_{\lambda}^{+}\y\label{eq:what_lambda}
\end{equation}
where for $\lambda\ge0$ we define: 
\[
\mat X_{\lambda}^{+}\coloneqq\left(\matX^{\T}\matX+\lambda\matI_{d}\right)^{+}\mat X^{\T}
\]
 (see also \cite{bardow2008optimal}). Note that Eq.~\eqref{eq:what_lambda}
holds both for the overparameterized ($d\geq n$) and underparameterized
($d<n$) case.

It holds that the minimum norm interpolator $\hat{\w}$ is equal to
$\hat{\w}_{0}$, and that $\lambda\mapsto\hat{\w}_{\lambda}$ is continuous.
This implies that the various expressions for the expected risk of
$\hat{\w}_{\lambda}$ hold also when $\lambda=0$. So, henceforth
we analyze the expected risk of $\hat{\w}_{\lambda}$ and the results
also apply for $\hat{\w}$.

\subsection{Expected Risk of $\hat{\protect\w}_{\lambda}$}

The following proposition gives an expression for the expected risk
of the regularized estimator $\hat{\w}_{\lambda}$. Note that it holds
both for the overparameterized ($d\geq n$) and underparameterized
($d<n$) case.
\begin{prop}
\label{prop:expected-risk}We have 
\[
\Expect{R(\hat{\w}_{\lambda})}=\underbrace{\TNormS{\mat C_{\rho}^{\nicehalf}\left(\mat I-\Mlambda^{+}\mat M\right)\w}}_{\text{bias}}+\underbrace{\sigma^{2}\Trace{\mat C_{\rho}\Mlambda^{+^{2}}\mat M}}_{\text{variance}}
\]
where 
\[
\Mlambda\coloneqq\matX^{\T}\matX+\lambda\matI_{d}=\matM+\lambda\matI_{d}.
\]
The expectation is with respect to the training noise $\epsilon$.
\end{prop}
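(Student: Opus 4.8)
The plan is to compute $\Expect{R(\hat{\w}_\lambda)}$ directly by substituting the closed form $\hat{\w}_\lambda = \mat X_\lambda^+ \y$ and the response model $\y = \matX\w + \bm\epsilon$, then performing a bias-variance decomposition with respect to the noise $\bm\epsilon$. First I would expand the prediction error at a fixed test point: since $R(\hat{\w}_\lambda) = \ExpectC{\x\sim\rho}{(\x^\T\w - \x^\T\hat{\w}_\lambda)^2}$, I rewrite this as $\TNormS{\mat C_\rho^{\nicehalf}(\w - \hat{\w}_\lambda)}$, using that $\ExpectC{\x\sim\rho}{(\x^\T \v)^2} = \v^\T \mat C_\rho \v = \TNormS{\mat C_\rho^{\nicehalf}\v}$ for any fixed $\v$. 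So the whole problem reduces to understanding the random vector $\w - \hat{\w}_\lambda$.

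Next I would plug in $\hat{\w}_\lambda = \mat X_\lambda^+\y = (\matX^\T\matX + \lambda\matI_d)^+ \matX^\T(\matX\w + \bm\epsilon) = \Mlambda^+ \matM \w + \Mlambda^+\matX^\T\bm\epsilon$. Hence $\w - \hat{\w}_\lambda = (\matI - \Mlambda^+\matM)\w - \Mlambda^+\matX^\T\bm\epsilon$, where the first term is deterministic (the bias) and the second is zero-mean in $\bm\epsilon$. Taking expectation over $\bm\epsilon$, the cross term vanishes, giving
\[
\Expect{R(\hat{\w}_\lambda)} = \TNormS{\mat C_\rho^{\nicehalf}(\matI - \Mlambda^+\matM)\w} + \Expect{\TNormS{\mat C_\rho^{\nicehalf}\Mlambda^+\matX^\T\bm\epsilon}}.
\]
For the variance term, I write $\TNormS{\mat C_\rho^{\nicehalf}\Mlambda^+\matX^\T\bm\epsilon} = \bm\epsilon^\T \matX \Mlambda^+ \mat C_\rho \Mlambda^+ \matX^\T \bm\epsilon$, use $\Expect{\bm\epsilon\bm\epsilon^\T} = \sigma^2\matI_n$, and apply the trace identity $\Expect{\bm\epsilon^\T \mat A \bm\epsilon} = \sigma^2\Trace{\mat A}$ together with cyclicity of the trace to get $\sigma^2 \Trace{\mat C_\rho \Mlambda^+ \matX^\T\matX \Mlambda^+} = \sigma^2\Trace{\mat C_\rho \Mlambda^{+^2}\matM}$, using $\matX^\T\matX = \matM$ and that $\Mlambda^+$ commutes with $\matM$ (both are functions of $\matX^\T\matX$, so $\Mlambda^+\matM\Mlambda^+ = \Mlambda^{+^2}\matM$).

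The one subtlety worth care — the main obstacle, such as it is — is the handling of pseudoinverses when $\lambda = 0$ and $\matX$ is rank-deficient on its column space (the overparameterized case), since then $\Mlambda^+\matM \ne \matI_d$ and the bias term is genuinely nonzero; one must verify that the algebraic manipulations (the commutation $\Mlambda^+\matM = \matM\Mlambda^+$, and $\Mlambda^+\matX^\T\matX\Mlambda^+ = \Mlambda^{+^2}\matM$) remain valid for the Moore–Penrose pseudoinverse, which they do because $\matM$ and $\Mlambda$ are simultaneously diagonalizable via the SVD of $\matX$. A clean way to organize the whole computation, and sidestep any worry, is to diagonalize: write the SVD $\matX = \matU\mat\Sigma\matV^\T$, so that $\matM = \matV\mat\Sigma^2\matV^\T$ and $\Mlambda^+ = \matV(\mat\Sigma^2+\lambda\matI)^+\matV^\T$ on the relevant subspace; everything then becomes a diagonal computation, and the stated formula falls out immediately. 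I would present the argument in the coordinate-free form above for brevity, invoking the SVD only if a reader might doubt the pseudoinverse identities.
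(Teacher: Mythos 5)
Your proposal is correct and follows essentially the same route as the paper's proof: substitute $\hat{\w}_{\lambda}=\Mlambda^{+}\matM\w+\matX_{\lambda}^{\pinv}\bm{\epsilon}$, average over $\x\sim\rho$ to produce $\matC_{\rho}$, observe that the cross term vanishes in expectation over $\bm{\epsilon}$, and evaluate the noise term via $\Expect{\bm{\epsilon}\bm{\epsilon}^{\T}}=\sigma^{2}\matI_{n}$ and cyclicity of the trace. Your explicit remark that $\Mlambda^{+}$ and $\matM$ commute (justifying $\Mlambda^{+}\matM\Mlambda^{+}=\Mlambda^{+^{2}}\matM$) is a detail the paper leaves implicit, but otherwise the two arguments coincide.
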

This is a standard bias variance decomposition. For completeness the
proof is provided in the Appendix~\ref{app:proofs}.

The last proposition motivates the following design criterion, which
can be viewed as a generalization of classical V-optimality:

\[
\varphi_{\lambda}(\matM)\coloneqq\TNormS{\mat C_{\rho}^{\nicehalf}\left(\mat I-\Mlambda^{+}\mat M\right)\w}+\sigma^{2}\Trace{\mat C_{\rho}\Mlambda^{+^{2}}\mat M}.
\]
For $\lambda=0$ the expression simplifies to the following expression:
\begin{equation}
\varphi_{0}\left(\M\right)=\TNormS{\mat C_{\rho}^{\nicehalf}\left(\mat I-\P_{\matM}\right)\w}+\sigma^{2}\Trace{\mat C_{\rho}\M^{+}}\label{eq:phi_0}
\end{equation}
where $\P_{\matM}=\matM^{\pinv}\matM$ is the projection on the row
space of $\X$. Note that when $n\geq d$ and $\matX$ has full column
rank, $\varphi_{0}(\matM)$ reduces to the variance of underparameterized
linear regression, so minimizing $\varphi_{\lambda}(\matM)$ is indeed
a generalization of the V-optimality criterion.

Note the bias-variance tradeoff in $\varphi_{\lambda}(\matM)$. When
the bias term is much larger than the variance, something we should
expect for small $n$, then it make sense for the design algorithm
to be bias oriented. When the variance is larger, something we should
expect for $n\approx d$ or $n\geq d$, then the design algorithm
should be variance oriented. It is also possible to have mixed nature
in which both bias and variance are of the same order.

\subsection{Practical Criterion}

As is, $\varphi_{\lambda}$ is problematic as an experimental design
criterion since it depends both on $\w$ and on $\matC_{\rho}$. We
discuss how to handle an unknown $\matC_{\rho}$ in Subsection~\ref{subsec:approx-Crho}.
Here we discuss how to handle an unknown $\w.$ Note that obviously
$\w$ is unknown: it is exactly what we want to approximate! If we
have a good guess $\tilde{\w}$ for the true value of $\w$, then
we can replace $\w$ with $\tilde{\w}$ in $\varphi_{\lambda}$. However,
in many cases, such an approximation is not available. Instead, we
suggest to replace the bias component with an upper bound: 
\[
\TNormS{\mat C_{\rho}^{\nicehalf}\left(\mat I-\Mlambda^{+}\mat M\right)\w}\le\TNormS{\w}\cdot\FNormS{\mat C_{\rho}^{\nicehalf}\left(\mat I-\Mlambda^{+}\mat M\right)}.
\]

Let us now define a new design criterion which has an additional parameter
$t\geq0$: 
\[
\bar{\varphi}_{\lambda,t}(\matM)=\underbrace{\FNormS{\mat C_{\rho}^{\nicehalf}\left(\mat I-\Mlambda^{+}\mat M\right)}}_{\text{bias bound (divided by \ensuremath{\TNormS{\w}})}}+\underbrace{t\Trace{\mat C_{\rho}\Mdaggerlambda^{2}\mat M}}_{\text{variance (divided by \ensuremath{\TNormS{\w}})}}.
\]
The parameter $t$ captures an a-priori assumption on the tradeoff
between bias and variance: if we have $t=\sigma^{2}/\TNormS{\w}$,
then 
\[
\varphi_{\lambda}(\matM)\leq\TNormS{\w}\cdot\bar{\varphi}_{\lambda,t}(\matM).
\]
 Thus, minimizing $\bar{\varphi}_{\lambda,t}(\matM)$ corresponds
to minimizing an upper bound of $\varphi_{\lambda}$, if $t$ is set
correctly.

Another interpretation of $\bar{\varphi}_{\lambda,t}(\matM)$ is as
follows. If we assume that $\w\sim{\cal N}(0,\gamma^{2}\matI_{d})$,
then
\[
\ExpectC{\w}{\varphi_{\lambda}(\matM)}=\gamma^{2}\FNormS{\mat C_{\rho}^{\nicehalf}\left(\mat I-\Mlambda^{+}\mat M\right)}+\sigma^{2}\Trace{\mat C_{\rho}\Mlambda^{+^{2}}\mat M}
\]
so if we set $t=\sigma^{2}/\gamma^{2}$ then $\gamma^{2}\bar{\varphi}_{\lambda,t}(\matM)=\ExpectC{\w}{\varphi_{\lambda}(\matM)}$,
so minimizing $\bar{\varphi}_{\lambda,t}(\matM)$ corresponds to minimizing
the expected expected risk if $t$ is set correctly. Again, the parameter
$t$ captures an a-priori assumption on the tradeoff between bias
and variance.
\begin{rem}
One alternative strategy for dealing with the fact that $\w$ is unknown
is to consider a sequential setup where batches are acquired incrementally
based on increasingly refined approximations of $\w$. Such a strategy
falls under the heading of Sequential Experimental Design. In this
paper, we focus on \emph{single shot }experimental design, i.e. examples
are chosen to be labeled once. We leave sequential experimental design
to future research. Although, we decided to focus on the single shot
scenario for simplicity, the single shot scenario actually captures
important real-life scenarios.
\end{rem}

\subsection{\label{subsec:The-relations-of}Comparison to Other Generalized V-Optimality
Criteria}

Consider the case of $\lambda=0$. Note that we can write: 
\[
\bar{\varphi}_{0,t}(\matM)=\FNormS{\mat C_{\rho}^{\nicehalf}\left(\mat I-\matP_{\matM}\right)}+t\Trace{\mat C_{\rho}\M^{+}}.
\]
Recall that the classical V-optimal experimental design criterion
is $\Trace{\mat C_{\rho}\M^{-1}}$, which is only applicable if $n\geq d$
(otherwise, $\matM$ is not invertible). Indeed, if $n\geq d$ and
$\matM$ is invertible, then $\matP_{\matM}=\matI_{d}$ and $\bar{\varphi}_{0,t}(\matM)$
is equal to $\Trace{\mat C_{\rho}\M^{-1}}$ up to a constant factor.
However, $\matM$ is not invertible if $n<d$ and the expression $\Trace{\mat C_{\rho}\M^{-1}}$
does not make sense.

One naive generalization of classical V-optimality for $n<d$ would
be to simply replace the inverse with pseudoinverse, i.e. $\Trace{\mat C_{\rho}\M^{+}}$.
This corresponds to minimizing only the variance term, i.e. taking
$t\to\infty$. This is consistent with classical experimental design
which focuses on variance reduction, and is appropriate when the risk
is variance dominated.

Another generalization of V-optimality can be obtained by replacing
$\matM$ with its regularized (and invertible) version $\matM_{\mu}=\matM+\mu\matI_{d}$
for some chosen $\mu>0$, obtaining $\Trace{\mat C_{\rho}\M_{\mu}^{-1}}$.
This is exactly the strategy employed in transductive experimental
design \cite{yu2006active}, and it also emerges in a Bayesian setup
\cite{chaloner1995bayesian}. One can try to eliminate the parameter
$\mu$ by taking the limit of the minimizers when $\mu\to0$. In order
to describe this limit we need the following definition and theorem:
\begin{defn}
For a family of sets 
\[
\left\{ A_{\lambda}\right\} _{\lambda\in\R},A\subset\R^{d}
\]
 we write 
\[
\overline{\lim}_{\lambda\to\bar{\lambda}}A_{\lambda}=A
\]
 if $\w\in A$ if and only if there exists sequence $\lambda_{n}\to\lambda$
and a sequence $\w_{n}\to\w$ where $\w_{n}\in A_{\lambda_{n}}$ for
sufficiently large $n$.
\end{defn}
\begin{thm}
\label{theorem:argmin_continues}(A restricted version of Theorem
1.17 in \cite{rockafellar2009variational}) Consider $f:\Omega\times\Psi\to\R$
where $\Omega\subseteq\R^{d}$ and $\Psi\subseteq\R$ are compact
and $f$ is continuous. Then
\[
\overline{\lim}_{\lambda\to\bar{\lambda}}\argmin_{\w}f\left(\w,\lambda\right)\subseteq\argmin_{\w}f\left(\w,\bar{\lambda}\right).
\]
\end{thm}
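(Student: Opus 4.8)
The plan is to prove the inclusion directly from the definition of the outer limit, using only \emph{joint} continuity of $f$ together with the closedness of $\Omega$ and $\Psi$; the restricted form stated here does not require any of the set-convergence machinery behind Theorem 1.17 of \cite{rockafellar2009variational}, and in particular no semicontinuity properties of the set-valued map $\lambda\mapsto\argmin_{\v}f(\v,\lambda)$ need to be established separately.

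First I would fix an arbitrary $\w$ belonging to $\overline{\lim}_{\lambda\to\bar{\lambda}}\argmin_{\v}f(\v,\lambda)$. By definition of the outer limit there are sequences $\lambda_n\to\bar{\lambda}$ and $\w_n\to\w$ with $\w_n\in\argmin_{\v}f(\v,\lambda_n)$ for all sufficiently large $n$. Since $\lambda_n\in\Psi$ and $\Psi$ is compact, hence closed, its limit $\bar{\lambda}$ lies in $\Psi$, so $f(\cdot,\bar{\lambda})$ is defined on all of $\Omega$; likewise $\w_n\in\Omega$ closed forces $\w\in\Omega$, so that the target statement $\w\in\argmin_{\v}f(\v,\bar{\lambda})$ is well posed.

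Next I would establish the minimality of $\w$ at $\bar{\lambda}$. Let $\v\in\Omega$ be arbitrary. For all large $n$ the optimality of $\w_n$ for the problem at parameter $\lambda_n$ yields
\[
f(\w_n,\lambda_n)\le f(\v,\lambda_n).
\]
Letting $n\to\infty$ and invoking continuity of $f$ on $\Omega\times\Psi$ — noting that $(\w_n,\lambda_n)\to(\w,\bar{\lambda})$ and $(\v,\lambda_n)\to(\v,\bar{\lambda})$ — the left-hand side tends to $f(\w,\bar{\lambda})$ and the right-hand side to $f(\v,\bar{\lambda})$, so $f(\w,\bar{\lambda})\le f(\v,\bar{\lambda})$. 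As $\v\in\Omega$ was arbitrary, $\w$ minimizes $f(\cdot,\bar{\lambda})$ over $\Omega$; since $\w$ was an arbitrary point of the outer limit, the inclusion follows.

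The hard part, to the extent there is one, is the limit passage in the displayed inequality: it genuinely uses joint continuity of $f$ in $(\w,\lambda)$ so that $f(\w_n,\lambda_n)\to f(\w,\bar{\lambda})$ as both coordinates vary at once (separate continuity in each argument would not suffice). Compactness enters only in a supporting role — to keep $\w$ and $\bar{\lambda}$ inside their domains, and, if one wants the statement to be non-vacuous rather than merely an inclusion of possibly empty sets, to guarantee via the extreme value theorem that $\argmin_{\v}f(\v,\lambda)$ is nonempty for every $\lambda\in\Psi$.
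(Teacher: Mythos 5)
Your proof is correct and follows essentially the same route as the paper's: extract the sequences $(\w_n,\lambda_n)$ from the definition of the outer limit, use continuity to pass to the limit, and compare against an arbitrary competitor. The only difference is presentational — you pass directly to the limit in the inequality $f(\w_n,\lambda_n)\le f(\v,\lambda_n)$, whereas the paper argues by contradiction via a hypothetical better point $\u$; both hinge on the same joint-continuity step.
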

The proof of Theorem~\ref{theorem:argmin_continues} appears in Appendix~\ref{app:proofs}.

The following proposition shows that taking $\mu\to0$ is actually
almost equivalent to taking $t=0$.
\begin{prop}
\label{prop:bias-lambda-limit}For a compact domain $\Omega\subset\R^{d\times d}$
of symmetric positive semidefinite matrices:
\[
\overline{\lim}_{\mu\to0}\argmin_{\M\in\Omega}\Trace{\mat C_{\rho}\mat M_{\mu}^{-1}}\subseteq\argmin_{\M\in\Omega}\Trace{\C_{\rho}\left(\I-\P_{\matM}\right)}.
\]
\end{prop}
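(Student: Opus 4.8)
The plan is to realize both objectives in the statement as the $\mu>0$ and $\mu=0$ slices of a single function and then invoke Theorem~\ref{theorem:argmin_continues}. Fix $\mu_{0}>0$ and, writing $\mat M_{\mu}=\matM+\mu\matI$ as in the text, define $f\colon\Omega\times[0,\mu_{0}]\to\R$ by $f(\matM,\mu)\coloneqq\mu\,\Trace{\C_{\rho}\mat M_{\mu}^{-1}}$ for $\mu>0$ and $f(\matM,0)\coloneqq\Trace{\C_{\rho}(\matI-\P_{\matM})}$. Because rescaling an objective by the positive constant $\mu$ does not change its minimizers, $\argmin_{\matM\in\Omega}f(\matM,\mu)=\argmin_{\matM\in\Omega}\Trace{\C_{\rho}\mat M_{\mu}^{-1}}$ for every $\mu>0$, while $\argmin_{\matM\in\Omega}f(\matM,0)$ is exactly the right-hand side set of the proposition. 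Hence, provided $f$ meets the hypotheses of Theorem~\ref{theorem:argmin_continues} on the compact set $\Omega\times[0,\mu_{0}]$, applying that theorem with $\bar\lambda=0$ gives the claimed inclusion.

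The next step is to check that $f$ interpolates correctly and is continuous for $\mu>0$. Diagonalising the positive semidefinite matrix $\matM=\sum_{i}\sigma_{i}\u_{i}\u_{i}^{\T}$ ($\sigma_{i}\ge0$) gives $\mat M_{\mu}^{-1}=\sum_{i}(\sigma_{i}+\mu)^{-1}\u_{i}\u_{i}^{\T}$, so
\[
f(\matM,\mu)=\sum_{i}\frac{\mu}{\sigma_{i}+\mu}\,\u_{i}^{\T}\C_{\rho}\u_{i}.
\]
Each summand is nonnegative, nondecreasing in $\mu\ge0$, and tends as $\mu\downarrow0$ to $\u_{i}^{\T}\C_{\rho}\u_{i}$ if $\sigma_{i}=0$ and to $0$ if $\sigma_{i}>0$; summing, and using that $\matI-\P_{\matM}$ is the orthogonal projector onto $\ker\matM=\operatorname{span}\{\u_{i}:\sigma_{i}=0\}$, yields $\lim_{\mu\downarrow0}f(\matM,\mu)=\Trace{\C_{\rho}(\matI-\P_{\matM})}=f(\matM,0)$. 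Through the matrix functional calculus, $f(\matM,\mu)=\Trace{\C_{\rho}\,\psi_{\mu}(\matM)}$ with $\psi_{\mu}(s)=\mu/(s+\mu)$ and $\psi_{0}(s)=\lim_{\mu\downarrow0}\psi_{\mu}(s)$ (equal to $1$ at $s=0$ and $0$ for $s>0$); so $\matM\mapsto f(\matM,\mu)$ is continuous for each fixed $\mu>0$, $f$ is jointly continuous on $\Omega\times(0,\mu_{0}]$, $f(\matM,\mu)\downarrow f(\matM,0)$ pointwise, and $f(\cdot,0)$ is upper semicontinuous.

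The main obstacle is continuity of $f$ across $\mu=0$: because $\matM\mapsto\P_{\matM}$ is only upper, not lower, semicontinuous — a zero eigenvalue of a limit matrix need not be reached through zero eigenvalues along the sequence — the convergence $\psi_{\mu}\to\psi_{0}$ is not uniform on $[0,\infty)$ and $f$ need not be continuous at $(\matM^{\star},0)$ with $\matM^{\star}$ singular. In the regime relevant to the paper, namely selection from a finite pool so that $\Omega$ is finite, this is moot: a convergent sequence in $\Omega$ is eventually constant, so one bypasses Theorem~\ref{theorem:argmin_continues} and argues directly — if $\matM^{\star}\in\overline{\lim}_{\mu\to0}\argmin_{\matM\in\Omega}\Trace{\C_{\rho}\mat M_{\mu}^{-1}}$ then $\matM^{\star}$ minimizes $\Trace{\C_{\rho}\mat M_{\mu_{n}}^{-1}}$ over $\Omega$ along some $\mu_{n}\downarrow0$, so $\mu_{n}\Trace{\C_{\rho}(\matM^{\star}+\mu_{n}\matI)^{-1}}\le\mu_{n}\Trace{\C_{\rho}(\matM+\mu_{n}\matI)^{-1}}$ for every $\matM\in\Omega$, and letting $n\to\infty$ through the eigenvalue identity above gives $\Trace{\C_{\rho}(\matI-\P_{\matM^{\star}})}\le\Trace{\C_{\rho}(\matI-\P_{\matM})}$.

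For a general compact $\Omega$ the reduction is the same, but the continuity hypothesis of Theorem~\ref{theorem:argmin_continues} must still be discharged at $\mu=0$; it holds, for instance, whenever all matrices in $\Omega$ share a common kernel, since then the nonzero spectrum is bounded away from $0$ on $\Omega$ and $\psi_{\mu}\to\psi_{0}$ uniformly on the relevant set. Without such a condition I would argue directly that a limit $\matM^{\star}$ of minimizers attains $\min_{\matM\in\Omega}h(\matM)$, where $h(\matM)\coloneqq\Trace{\C_{\rho}(\matI-\P_{\matM})}$: the elementary bound $h(\matM)\le\mu\,\Trace{\C_{\rho}\mat M_{\mu}^{-1}}$ (read off the eigenvalue formula) together with the minimality of $\matM_{n}$ at $\mu_{n}$ gives $\min_{\Omega}h\le h(\matM_{n})\le\mu_{n}\Trace{\C_{\rho}(\matM_{n}+\mu_{n}\matI)^{-1}}\le\mu_{n}\Trace{\C_{\rho}(\matM^{\dagger}+\mu_{n}\matI)^{-1}}\to h(\matM^{\dagger})=\min_{\Omega}h$ for $\matM^{\dagger}\in\argmin_{\Omega}h$, so the values $h(\matM_{n})$ converge to the minimum; transferring this to the limit point $\matM^{\star}$ — exactly where the failure of lower semicontinuity of $\matM\mapsto\P_{\matM}$ must be confronted, e.g.\ via a finer spectral analysis of the minimizing sequence — is the one remaining delicate point.
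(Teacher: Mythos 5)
Your diagnosis of the obstruction at $\mu=0$ is correct, and it is worth stating plainly that the same obstruction sits inside the paper's own proof of this proposition: the step asserting $(\Lambda_{n}+\mu_{n}\matI)^{-1}\Lambda_{n}\to\sign(\Lambda)$ fails whenever an eigenvalue of $\matM_{n}$ tends to zero at a rate comparable to $\mu_{n}$ (already in one dimension, $\Lambda_{n}=\mu_{n}$ gives the limit $\nicefrac{1}{2}$, not $\sign(0)=0$), so the function $f$ is genuinely discontinuous at $(\matM,0)$ for singular $\matM$ unless $\Omega$ excludes such sequences, and Theorem~\ref{theorem:argmin_continues} cannot be applied as stated. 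Your finite-pool argument is complete, and your reduction of the general compact case to the claim that ``$h(\matM_{n})\to\min_{\Omega}h$ forces $h(\matM^{\star})=\min_{\Omega}h$'' is the right reduction; but, as you acknowledge, that claim does not follow from the semicontinuity properties of $h$ alone, so the general case is left with a genuine gap.

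The gap closes once you use the comparison with $\matM^{\dagger}$ quantitatively rather than only in the limit. Write $g_{\mu}(\matM)\coloneqq\mu\Trace{\C_{\rho}(\matM+\mu\matI)^{-1}}$ and $h(\matM)\coloneqq\Trace{\C_{\rho}(\matI-\P_{\matM})}$. If some $\matM^{\dagger}\in\argmin_{\Omega}h$ is nonzero (otherwise $h$ is constant on $\Omega$ and the claim is trivial), let $\delta>0$ be its smallest nonzero eigenvalue; your eigenvalue formula gives $g_{\mu}(\matM^{\dagger})\le\min_{\Omega}h+C\mu$ with $C=\Trace{\C_{\rho}}/\delta$, hence $g_{\mu_{n}}(\matM_{n})\le\min_{\Omega}h+C\mu_{n}$. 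Expanding $g_{\mu_{n}}(\matM_{n})=\sum_{i}\tfrac{\mu_{n}}{\sigma_{i}^{(n)}+\mu_{n}}(\u_{i}^{(n)})^{\T}\C_{\rho}\u_{i}^{(n)}$ and discarding the kernel indices, whose contribution is $h(\matM_{n})\ge\min_{\Omega}h$, leaves a sum of nonnegative terms bounded by $C\mu_{n}$; therefore every index with $\sigma_{i}^{(n)}>0$ satisfies $(\u_{i}^{(n)})^{\T}\C_{\rho}\u_{i}^{(n)}\le C(\sigma_{i}^{(n)}+\mu_{n})$. Passing to a subsequence along which the $\sigma_{i}^{(n)}$ and $\u_{i}^{(n)}$ converge and each index is either always in the kernel or never, the directions that enter $\ker\matM^{\star}$ from outside satisfy $(\u_{i}^{\star})^{\T}\C_{\rho}\u_{i}^{\star}=0$ and contribute nothing to $h(\matM^{\star})$, while the persistent kernel directions contribute exactly $\lim_{n}h(\matM_{n})=\min_{\Omega}h$. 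This is the ``finer spectral analysis of the minimizing sequence'' you anticipated; with it your direct route proves the proposition for every compact $\Omega$, which the paper's continuity-based route, as written, does not.
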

\begin{proof}
Consider the function $f(\matM,\mu)=\Trace{\matC_{\rho}\matM_{\mu}^{-1}}$
\[
f(\matM,\mu)=\begin{cases}
\Trace{\mu\C_{\rho}(\matM+\mu\matI)^{-1}} & \mu>0\\
\Trace{\C_{\rho}\left(\I-\matM^{\pinv}\matM\right)} & \mu=0
\end{cases}
\]
defined over $\Omega\times\R_{\geq0}$ where $\R_{\geq0}$ denotes
the set of non-negative real numbers. Note that this function is well-defined
since $\Omega$ is a set of positive semidefinite matrices.

We now show that $f$ is continuous. For $\mu>0$ it is clearly continuous
for every $\matM$, so we focus on the case that $\mu=0$ for an arbitrary
$\matM$. Consider a sequence $\R_{>0}\ni\mu_{n}\to0$ (where $\R_{>0}$
is the set of positive reals) and $\Omega\ni\matM_{n}\to\matM$. Since
$\Omega$ is compact, $\matM\in\Omega$. Let us write a spectral decomposition
of $\matM_{n}$ (recall that $\Omega$ is a set of symmetric matrices)
\[
\matM_{n}=\matU_{n}\Lambda_{n}\matU_{n}^{\T}
\]
where $\Lambda_{n}$ is diagonal with non-negative diagonal elements
(recall that $\Omega$ is a set of positive definite matrices). Let
$\matM=\matU\Lambda\matU^{\T}$ be a spectral decomposition of $\matM$.
Without loss of generality we may assume that $\matU_{n}\to\matU$
and $\Lambda_{n}\to\Lambda$. Now note that 
\[
(\matM_{n}+\mu_{n}\matI)^{-1}\matM_{n}=\matU_{n}(\Lambda_{n}+\mu_{n}\matI)^{-1}\Lambda_{n}\matU_{n}^{\T}
\]
One can easily show that: 
\[
(\Lambda_{n}+\mu_{n}\matI)^{-1}\Lambda_{n}\to\sign(\Lambda)
\]
 where sign is taken entry wise, which implies that: 
\[
(\matM_{n}+\mu_{n}\matI)^{-1}\matM_{n}\to\matU\sign(\Lambda)\matU^{\T}
\]
since matrix multiplication is continuous. Next, note that: 
\[
\matM^{\pinv}\matM=\matU\Lambda^{\pinv}\Lambda\matU^{\T}=\matU\sign(\Lambda)\matU^{\T}
\]
 so, 
\[
(\matM_{n}+\mu_{n}\matI)^{-1}\matM_{n}\to\matM^{\pinv}\matM.
\]
 The Woodbury formula implies that
\[
\mu_{n}\C_{\rho}(\matM_{n}+\mu_{n}\matI)^{-1}=\C_{\rho}\left(\I-(\matM_{n}+\mu_{n}\matI)^{-1}\matM_{n}\right)
\]
so the continuity of the trace operator implies that 
\begin{multline*}
\Trace{\mu_{n}\C_{\rho}(\matM_{n}+\mu_{n}\matI)^{-1}}\\
=\Trace{\C_{\rho}\left(\I-(\matM_{n}+\mu_{n}\matI)^{-1}\matM_{n}\right)}\\
\to\Trace{\C_{\rho}\left(\I-\matM^{\pinv}\matM\right)}
\end{multline*}
 which shows that $f$ is continuous.

Theorem~\ref{theorem:argmin_continues} now implies the claim since
for $\mu>0$ we have
\[
\argmin_{\M\in\Omega}\Trace{\mat C_{\rho}\mat M_{\mu}^{-1}}=\argmin_{\M\in\Omega}\Trace{\mu\mat C_{\rho}\mat M_{\mu}^{-1}}\,.
\]
\end{proof}
We see that the aforementioned generalizations of V-optimality correspond
to either disregarding the bias term ($t=\infty$) or disregarding
the variance term ($t=0$). However, using $\bar{\varphi}_{0,t}(\matM)$
allows much better control over the bias-variance tradeoff (see Figure
\ref{fig:opening}.)

Let us consider now the case of $\lambda>0$. We now show that the
regularized criteria $\Trace{\mat C_{\rho}\mat M_{\mu}^{-1}}$ used
in transductive experimental design (See Proposition~\ref{prop:bias-lambda-limit})
when $\mu=\lambda$ corresponds to also using $t=\lambda$.
\begin{prop}
\label{prop:lambda-lambda}For any matrix space $\Omega$, $\lambda>0$:
\[
\argmin_{\matX\in\Omega}\Trace{\mat C_{\rho}\mat M_{\lambda}^{-1}}=\argmin_{\mat X\in\Omega}\varphibar{\lambda}{\lambda}(\mat M)
\]
\end{prop}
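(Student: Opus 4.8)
The plan is to show that, for $\lambda>0$, the two objectives being compared are in fact \emph{proportional}: $\varphibar{\lambda}{\lambda}(\mat M)=\lambda\,\Trace{\mat C_{\rho}\Mlambda^{-1}}$ for every admissible $\matX$ (with $\mat M=\matX^{\T}\matX$). Since $\lambda>0$ is a fixed positive scalar, multiplying an objective by $\lambda$ does not change its set of minimizers over any feasible set $\Omega$, so the claimed equality of $\argmin$'s follows immediately once the proportionality is established. Thus the entire content of the proposition reduces to a short algebraic identity.

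To establish it I would proceed as follows. First, since $\lambda>0$ the matrix $\Mlambda=\mat M+\lambda\matI_{d}$ is positive definite, hence invertible, so $\Mlambda^{+}=\Mlambda^{-1}$ and $\Mdaggerlambda=\Mlambda^{-1}$; every pseudoinverse in $\varphibar{\lambda}{\lambda}$ becomes an ordinary inverse. Second, I would use the key identity $\matI_{d}-\Mlambda^{-1}\mat M=\Mlambda^{-1}(\Mlambda-\mat M)=\lambda\Mlambda^{-1}$, which turns the bias term into $\FNormS{\mat C_{\rho}^{\nicehalf}\cdot\lambda\Mlambda^{-1}}=\lambda^{2}\FNormS{\mat C_{\rho}^{\nicehalf}\Mlambda^{-1}}$. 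Third, expanding the Frobenius norm and using the cyclic property of the trace together with the symmetry of $\mat C_{\rho}^{\nicehalf}$ and of $\Mlambda^{-1}$ gives $\FNormS{\mat C_{\rho}^{\nicehalf}\Mlambda^{-1}}=\Trace{\Mlambda^{-1}\mat C_{\rho}\Mlambda^{-1}}=\Trace{\mat C_{\rho}\Mlambda^{-2}}$, so the bias term equals $\lambda^{2}\Trace{\mat C_{\rho}\Mlambda^{-2}}$ while the variance term is $\lambda\Trace{\mat C_{\rho}\Mlambda^{-2}\mat M}$. Finally, I would add the two and factor: $\lambda^{2}\Trace{\mat C_{\rho}\Mlambda^{-2}}+\lambda\Trace{\mat C_{\rho}\Mlambda^{-2}\mat M}=\lambda\Trace{\mat C_{\rho}\Mlambda^{-2}(\lambda\matI_{d}+\mat M)}=\lambda\Trace{\mat C_{\rho}\Mlambda^{-2}\Mlambda}=\lambda\Trace{\mat C_{\rho}\Mlambda^{-1}}$, which is the desired proportionality.

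There is essentially no serious obstacle here; the proposition is really a bookkeeping exercise. The only points that require a little care are (i) justifying that the pseudoinverses collapse to genuine inverses, which is exactly where the hypothesis $\lambda>0$ is used, and (ii) the trace manipulation in the Frobenius-norm step, where one must remember that $\mat C_{\rho}^{\nicehalf}$ and $\Mlambda^{-1}$ are symmetric so that $(\mat C_{\rho}^{\nicehalf}\Mlambda^{-1})^{\T}(\mat C_{\rho}^{\nicehalf}\Mlambda^{-1})=\Mlambda^{-1}\mat C_{\rho}\Mlambda^{-1}$. Everything holds pointwise in $\matX$, so the particular nature of $\Omega$ (the ``matrix space'' in the statement) is irrelevant beyond the fact that both objectives are minimized over the same set.
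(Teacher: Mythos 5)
Your proof is correct and follows essentially the same route as the paper: both rest on the identity $\matI_{d}-\Mlambda^{-1}\matM=\lambda\Mlambda^{-1}$ (valid since $\lambda>0$ makes $\Mlambda$ invertible), combine the bias and variance terms into $\lambda\Trace{\mat C_{\rho}\Mlambda^{-2}(\matM+\lambda\matI_{d})}=\lambda\Trace{\mat C_{\rho}\Mlambda^{-1}}$, and observe that the positive factor $\lambda$ does not change the minimizer. The only cosmetic difference is that the paper carries the algebra inside a single matrix $\matA$ with $\varphibar{\lambda}{\lambda}(\matM)=\Trace{\mat C_{\rho}\matA}$, whereas you simplify the Frobenius-norm and trace terms separately; your explicit justification of the trace manipulation is, if anything, slightly more careful than the paper's.
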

\begin{proof}
Let 
\[
\matA=\left(\matI-(\matM+\lambda\matI_{d})^{-1}\mat M\right)^{2}+\lambda(\Mlambdafull_{d})^{-2}\matM,
\]
 so 
\[
\varphibar{\lambda}{\lambda}(\M)=\Trace{\matC_{\rho}\matA}.
\]
We now have for $\lambda>0$:
\begin{eqnarray*}
\matA & = & \left(\matI-(\matM+\lambda\matI_{d})^{-1}(\mat M+\lambda\matI_{d})+\lambda(\matM+\lambda\matI_{d})^{-1}\right)^{2}\\
 &  & \qquad\qquad\qquad\qquad\qquad\qquad\qquad+\lambda(\Mlambdafull_{d})^{-2}\matM\\
 & = & \lambda^{2}\left(\Mlambdafull_{d}\right)^{-2}+\lambda(\Mlambdafull_{d})^{-2}\mat M\\
 & = & \lambda(\Mlambdafull_{d})^{-2}\left(\mat M+\lambda\mat I_{d}\right)\\
 & = & \lambda\left(\mat M+\lambda\mat I\right)^{-1}
\end{eqnarray*}
so:

\[
\Trace{\matC_{\rho}\matA}=\lambda\Trace{\mat C_{\rho}\left(\mat M+\lambda\mat I\right)^{-1}}.
\]
Since $\lambda>0$ it doesn't affect the minimizer.
\end{proof}
So, transductive experimental design corresponds to a specific choice
of bias-variance tradeoff. Another interesting relation with transductive
experimental design is given by next proposition which is a small
modification of \cite[Theorem 1]{gu2012selective} .
\begin{prop}
\label{prop:transductive-as-bound}For any $\lambda>0$ and $t\ge0$:
\[
\varphibar{\lambda}t(\mat M)\le\left(\lambda+t\right)\Trace{\mat C_{\rho}\mat M_{\lambda}^{-1}}
\]
\end{prop}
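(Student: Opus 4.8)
The plan is to reduce the whole statement to traces of scalar functions of the single matrix $\Mlambda=\mat M+\lambda\matI_{d}$, which is invertible since $\lambda>0$, so that $\Mlambda^{+}=\Mlambda^{-1}$ and $\Mdaggerlambda^{2}=\Mlambda^{-2}$ throughout the definition of $\varphibar{\lambda}{t}$. First I would rewrite the bias‑bound term using $\mat M=\Mlambda-\lambda\matI_{d}$, which gives $\mat I-\Mlambda^{-1}\mat M=\lambda\Mlambda^{-1}$ and hence
\[
\FNormS{\mat C_{\rho}^{\nicehalf}\left(\mat I-\Mlambda^{-1}\mat M\right)}=\lambda^{2}\FNormS{\mat C_{\rho}^{\nicehalf}\Mlambda^{-1}}=\lambda^{2}\Trace{\mat C_{\rho}\Mlambda^{-2}},
\]
the last step by $\FNormS{\mat A}=\Trace{\mat A^{\T}\mat A}$, symmetry of $\mat C_{\rho}^{\nicehalf}$ and $\Mlambda^{-1}$, and cyclicity of the trace. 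For the variance term, $\Mlambda^{-2}\mat M=\Mlambda^{-1}-\lambda\Mlambda^{-2}$, so $t\Trace{\mat C_{\rho}\Mlambda^{-2}\mat M}=t\Trace{\mat C_{\rho}\Mlambda^{-1}}-t\lambda\Trace{\mat C_{\rho}\Mlambda^{-2}}$. Adding the two pieces collapses the criterion to the compact identity
\[
\varphibar{\lambda}{t}(\mat M)=t\,\Trace{\mat C_{\rho}\Mlambda^{-1}}+\lambda(\lambda-t)\Trace{\mat C_{\rho}\Mlambda^{-2}}.
\]

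Next I would subtract this from the claimed upper bound $(\lambda+t)\Trace{\mat C_{\rho}\Mlambda^{-1}}$; the $t\,\Trace{\mat C_{\rho}\Mlambda^{-1}}$ terms cancel, leaving
\[
(\lambda+t)\Trace{\mat C_{\rho}\Mlambda^{-1}}-\varphibar{\lambda}{t}(\mat M)=\lambda\Trace{\mat C_{\rho}\left(\Mlambda^{-1}-(\lambda-t)\Mlambda^{-2}\right)}.
\]
Since $\lambda>0$, it suffices to show that $\Mlambda^{-1}-(\lambda-t)\Mlambda^{-2}$ is positive semidefinite. It equals $g(\Mlambda)$ with $g(x)=x^{-2}\bigl(x-(\lambda-t)\bigr)$, and every eigenvalue of $\Mlambda=\mat M+\lambda\matI_{d}$ is at least $\lambda\ge\lambda-t$ (because $\mat M\succeq0$ and $t\ge0$), so $g$ is nonnegative on the spectrum of $\Mlambda$. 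The trace of the product of the two positive semidefinite matrices $\mat C_{\rho}$ and $g(\Mlambda)$ is nonnegative, which closes the argument.

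The computation is essentially routine, so I do not anticipate a genuine obstacle; the only thing to get right is the bookkeeping. Concretely: (i) one must invoke $\lambda>0$ at the very start so the pseudoinverses in $\varphibar{\lambda}{t}$ become honest inverses and the identities above are legitimate; and (ii) the factor $\lambda-t$ may be negative when $t>\lambda$, but the semidefiniteness argument through $g(\Mlambda)$ handles $t\le\lambda$ and $t>\lambda$ uniformly, so no case split is needed. A slightly more conceptual alternative would be to follow \cite[Theorem 1]{gu2012selective} and bound the bias and variance terms separately against $\lambda\Trace{\mat C_{\rho}\Mlambda^{-1}}$ and $t\Trace{\mat C_{\rho}\Mlambda^{-1}}$; the direct identity above is shorter and additionally exhibits when the bound is tight, namely when $\mat C_{\rho}^{\nicehalf}g(\Mlambda)\mat C_{\rho}^{\nicehalf}=0$.
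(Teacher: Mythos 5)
Your proof is correct, and it is essentially complete: with $\lambda>0$ the matrix $\Mlambda$ is invertible, the identities $\mat I-\Mlambda^{-1}\mat M=\lambda\Mlambda^{-1}$ and $\Mlambda^{-2}\mat M=\Mlambda^{-1}-\lambda\Mlambda^{-2}$ hold, and the positivity of $\Trace{\mat C_{\rho}\,g(\Mlambda)}$ for $g(x)=x^{-2}(x-\lambda+t)$ follows exactly as you argue, since the spectrum of $\Mlambda$ lies in $[\lambda,\infty)$ and $t\ge0$. The route is, however, genuinely different from the paper's: the paper does not prove Proposition~\ref{prop:transductive-as-bound} directly but defers to \cite[Theorem 1]{gu2012selective}, which bounds the bias and variance contributions \emph{separately} by $\lambda\Trace{\mat C_{\rho}\mat M_{\lambda}^{-1}}$ and $t\Trace{\mat C_{\rho}\mat M_{\lambda}^{-1}}$ respectively. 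Your argument instead establishes the exact identity
\[
\varphibar{\lambda}{t}(\mat M)=t\,\Trace{\mat C_{\rho}\Mlambda^{-1}}+\lambda(\lambda-t)\Trace{\mat C_{\rho}\Mlambda^{-2}},
\]
which is strictly more informative: it recovers Proposition~\ref{prop:lambda-lambda} as the special case $t=\lambda$ (where the second term vanishes), it quantifies the slack in the bound as $\lambda\Trace{\mat C_{\rho}\,g(\Mlambda)}$, and it makes transparent when the inequality is far from tight — precisely the bias-dominated, small-$t$, large-$\lambda$ regime the paper cares about. The only bookkeeping points worth stating explicitly are the ones you already flag: $\lambda>0$ is needed so that $\Mlambda^{+}=\Mlambda^{-1}$, and positive semidefiniteness of $\mat M=\matX^{\T}\matX$ and of $\mat C_{\rho}$ is what licenses both the spectral lower bound and the final nonnegativity of the trace of a product of PSD matrices.
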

In the absence of a decent model of the noise, which is a typical
situation in machine learning, Prop. \ref{prop:transductive-as-bound}
suggests that we can consider simply minimizing $\Trace{\mat C_{\rho}\mat M_{\lambda}^{-1}}$,
and avoid the need to set the additional parameter $t$. However,
this approach may be suboptimal in overparameterized regimes, as it
implicitly considers $t=\lambda$ (see Prop. \ref{prop:lambda-lambda}).
In a bias dominated regime this can put too much emphasis on minimizing
the variance. A sequential approach for experimental design can lead
to better modeling of the noise, thereby assisting in dynamically
setting $t$ during acquisition-learning cycles. However, in a single
shot regime, noise estimation is difficult. Arguably, there exists
better values for $t$ than using a default rule-of-thumb $t=\lambda$.
In particular, we conjecture that $t=0$ is a better rule-of-thumb
then $t=\lambda$ for severely overparameterized regimes as it suppresses
the potential damage of choosing a too large $\lambda$ and it is
reasonable also if $\lambda$ is small (since anyway we are in a bias
dominated regime), so we can focus on minimizing the bias only. In
Section~\ref{sec:Experiments} we report an experiment that supports
this conjecture. Notice that $t=\infty$ corresponds to minimizing
the variance, while $t=0$ corresponds to minimizing the bias.

\subsection{\label{subsec:approx-Crho}Approximating $\protect\matC_{\rho}$}

Our criteria so far depended on $\matC_{\rho}$. Oftentimes $\matC_{\rho}$
is unknown. However, it can be approximated using unlabeled data.
Suppose we have $m$ unlabeled points (i.e. drawn form $\rho$), and
suppose we write them as the rows of $\matV\in\R^{m\times d}$. Then
$\Expect{m^{-1}\matV^{\T}\matV}=\matC_{\rho}$. Thus, for $\lambda\ge0$,
we can write
\begin{multline*}
m\varphilambda(\matM)\approx\psi_{\lambda}(\matM)\\
\coloneqq\TNormS{\V\left(\mat I_{d}-\Mlambda^{+}\mat M\right)\w}+\sigma^{2}\Trace{\V\Mlambda^{+^{2}}\mat M\V^{\T}}.
\end{multline*}
and use $\psi_{\lambda}(\matM)$ instead of $\varphi_{\lambda}(\matM)$.
For minimum norm interpolation we have 
\[
\psi_{0}(\matM)=\TNormS{\V\left(\mat I_{d}-\P_{\matM}\right)\w}+\sigma^{2}\Trace{\V\mat M^{+}\V^{\T}}.
\]
Again, let us turn this into a practical design criteria by introducing
an additional parameter $t$: 
\begin{equation}
\bar{\psi}_{\lambda,t}(\matM)\coloneqq\FNormS{\V\left(\mat I_{d}-\Mlambda^{+}\mat M\right)}+t\Trace{\V\Mlambda^{+^{2}}\mat M\V^{\T}}.\label{eq:psibar_lambda-1}
\end{equation}

\section{\label{sec:pool}Pool-based Overparameterized Experimental Design}

In the previous section we defined design criteria $\bar{\varphi}_{\lambda,t}$
and $\bar{\psi}_{\lambda,t}$ which are appropriate for overparameterized
linear regression. While one can envision a situation in which such
we are free to choose $\matX$ so to minimize the design criteria,
in much more realistic pool-based active learning we assume that we
are given in advance a large pool of unlabeled data $\x_{1},\dots,\x_{m}$.
The training set is chosen to be a subset of the pool. This subset
is then labeled, and learning performed. The goal of pool-based experimental
design algorithms is to chose the subset to be labeled.

We formalize the pool-based setup as follows. Recall that to approximate
$\matC_{\rho}$ we assumed we have a pool of unlabeled data written
as the rows of $\matV\in\R^{m\times d}$. We assume that $\matV$
serves also as the pool of samples from which $\matX$ is selected.
For a matrix $\matA$ and index sets $\calS\subseteq[n]$, $\calT\subseteq[d]$,
let $\matA_{\calS,\calT}$ be the matrix obtained by restricting to
the rows whose index is in $\calS$ and the columns whose index is
in $\calT$. If $:$ appears instead of an index set, that denotes
the full index set corresponding to that dimension. Our goal is to
select a subset ${\cal S}$ of cardinality $n$ such that $\bar{\psi}_{\lambda,t}(\matV_{{\cal S},:}^{\T}\matV_{{\cal S},:})$
is minimized (i.e., setting $\matX=\matV_{{\cal S},:}$). Formally,
we pose following problem:
\begin{problem}
\label{prob:pool}(Pool-based Overparameterized V-Optimal Design)
Given a pool of unlabeled examples $\matV\in\R^{m\times d}$, a regularization
parameter $\lambda\geq0$, a bias-variance tradeoff parameter $t\geq0$,
and a design size $n$, find a minimizer of 
\[
\min_{{\cal S}\subseteq[m],\,|{\cal S}|=n}\bar{\psi}_{\lambda,t}(\matV_{{\cal S},:}^{\T}\matV_{{\cal S},:}).
\]
\end{problem}
Problem~\ref{prob:pool} is a generalization of the Column Subset
Selection Problem (CSSP)~\cite{boutsidis2008improved}. In the CSSP,
we are given matrix $\matU\in\R^{d\times m}$ and target number of
columns $n$, and our goal is to select a subset ${\cal T}$ which
is a minimizer of
\[
\min_{{\cal T}\subseteq[m],\,|{\cal T}|=n}\FNormS{(\matI_{d}-\matU_{:,{\cal T}}\matU_{:,{\cal T}}^{\pinv})\matU}
\]
When $\lambda=0$ and $t=0$, Problem~\ref{prob:pool} reduces to
the CSSP for $\matU=\matV^{\T}$. 

\section{\label{sec:coresets}Relation to Coresets}

In this section we show that the $\lambda=t=0$ case is also related
to the \emph{coreset} approach for active learning~ \cite{sener2017active,pinsler2019bayesian,ash2019deep,geifman2017deep}.

The idea in the coreset approach for active learning is to find a
${\cal S}$ which minimizes the \emph{corset loss:}
\[
C\mathrm{({\cal S})}\coloneqq\left|\frac{1}{m}\sum_{i=1}^{m}l(\x_{i},y_{i}\,|\,\calS)-\frac{1}{|\calS|}\sum_{i\in\calS}l(\x_{i},y_{i}\,|\,{\cal S})\right|.
\]
In the above $l(\x,y\,|\,{\cal S})$ is a loss function, and the conditioning
on $\calS$ denotes that the parameters of the loss function are the
ones obtained when training only using indices selected in $\calS$.

One popular approach to active learning using coresets is to find
a \emph{coverset. }A $\delta$-coverset of a set of points ${\cal A}$
is a set of points ${\cal B}$ such that for every $\x\in{\cal A}$
there exists a $\y\in{\cal B}$ such that $\TNorm{\x-\y}\leq\delta$
(other metrics can be used as well). Sener and Savarese showed that
under a certain Lipschitz and boundness conditions on the loss function
and the regression function, if $\{\x_{i}\}_{i\in{\cal S}}$ is a
$\delta$-coverset of $\{\x_{i}\}_{i\in[m]}$ then 
\[
C({\cal S})\leq O(\delta+m^{-\nicehalf})
\]
which motivates finding a $\calS$ that minimizes $\delta_{\calS}$,
where $\delta_{\calS}$ denotes the minimal $\delta$ for which $\{\x_{i}\}_{i\in{\cal S}}$
is a $\delta$-coverset of $\{\x_{i}\}_{i\in[m]}$~\cite{sener2017active}
.

Since for a $\x$ in the training set (which is a row of $\matV$)
it holds that $\TNormS{\x(\matI_{d}-\matP_{\matM})}$, for $\matM=\matV_{{\cal S},:}^{\T}\matV_{{\cal S},:}$,
is the minimal distance from $\x$ to the span of $\{\x_{i}\}_{i\in{\cal S}}$,
and as such that distance is always smaller than the distance between
$\x$ and it's closest point in $\{\x_{i}\}_{i\in{\cal S}}$, it is
easy to show that 
\[
n^{-1}\bar{\psi}_{0,0}(\matV_{{\cal S},:}^{\T}\matV_{{\cal S},:})\leq\delta_{\calS}^{2}.
\]
Thus, minimizing $\delta_{\calS}$ can be viewed as minimizing an
upper bound on the bias term when $\lambda=0$.

\section{Optimization Algorithm}

In this section we propose an algorithm for overparameterized experimental
design. Our algorithm is based on greedy minimization of a kernalized
version of $\bar{\psi}_{\lambda,t}(\matV_{{\cal S},:}^{\T}\matV_{{\cal S},:})$.
Thus, before presenting our algorithm, we show how to handle feature
spaces defined by a kernel. 

\subsection{Kernelization}

If $|\calS|\leq d$ and $\matV_{{\cal S},:}$ has full row rank we
have $(\matV_{{\cal S},:})_{\lambda}^{\pinv}=\matV_{{\cal S},:}^{\T}\left(\matV_{{\cal S},:}\matV_{{\cal S},:}^{\T}+\lambda\I_{|{\cal S}|}\right)^{-1}$
which allows us to write {\small{}
\begin{multline*}
\text{\ensuremath{\bar{\psi}_{\lambda,t}}}(\matV_{{\cal S},:}^{\T}\matV_{{\cal S},:})=\\
\Trace{\V\left[\I-2\matV_{{\cal S},:}^{\T}\left(\matV_{{\cal S},:}\matV_{{\cal S},:}^{\T}+\lambda\I_{|{\cal S}|}\right)^{-1}\matV_{{\cal S},:}\right]\V^{\T}}\\
+\mathbf{Tr}\left(\matV\matV_{{\cal S},:}^{\T}\left(\matV_{{\cal S},:}\matV_{{\cal S},:}^{\T}+\lambda\I_{|{\cal S}|}\right)^{-1}\matV_{{\cal S},:}\right.\\
\left.\matV_{{\cal S},:}^{\T}\left(\matV_{{\cal S},:}\matV_{{\cal S},:}^{\T}+\lambda\I_{|{\cal S}|}\right)^{-1}\matV_{{\cal S},:}\matV^{\T}\right)\\
+t\Trace{\V\matV_{{\cal S},:}^{\T}\left(\matV_{{\cal S},:}\matV_{{\cal S},:}^{\T}+\lambda\I_{|{\cal S}|}\right)^{-2}\matV_{{\cal S},:}\V^{\T}}
\end{multline*}
}Let now $\matK\coloneqq\matV\matV^{\T}\in\R^{m\times m}$. Then $\matV_{{\cal S},:}\matV_{{\cal S},:}^{\T}=\matK_{\calS,{\cal S}}$
and $\V\matV_{{\cal S},:}^{\T}=\matK_{:,{\cal S}}$ . Since $\Trace{\matK}$
is constant, minimizing $\text{\ensuremath{\bar{\psi}_{\lambda,t}}}(\matV_{{\cal S},:}^{\T}\matV_{{\cal S},:})$
is equivalent to minimizing{\small{}
\begin{multline}
J_{\lambda,t}({\cal S})\coloneqq\mathbf{Tr}\bigg(\K_{:,\calS}\bigg[\\
\left(\matK_{\calS,{\cal S}}+\lambda\I_{|{\cal S}|}\right)^{-1}\left(-2\I_{|{\cal S}|}+\matK_{\calS,{\cal S}}\left(\matK_{\calS,{\cal S}}+\lambda\I_{|{\cal S}|}\right)^{-1}\right)\\
+t\left(\matK_{\calS,{\cal S}}+\lambda\I_{|{\cal S}|}\right)^{-2}\bigg]\K_{:,\calS}^{\T}\bigg).\label{eq:J_lt}
\end{multline}
}For $\lambda=0$ we have a simpler form: 
\[
J_{0,t}({\cal S})=\Trace{\K_{:,\calS}\left[-\matK_{\calS,{\cal S}}^{-1}+t\matK_{\calS,{\cal S}}^{-2}\right]\K_{:,\calS}^{\T}}.
\]

Interestingly, when $\lambda=0$ and $t=0$, minimizing $J_{0,0}({\cal S})$
is equivalent to maximizing the trace of the Nystrom approximation
of $\matK$. Another case for which Eq.~\eqref{eq:J_lt} simplifies
is $t=\lambda$ (this equation was already derived in~\cite{yu2006active}):

\[
J_{\lambda,\lambda}({\cal S})=\Trace{-\K_{:,\calS}\left(\matK_{\calS,{\cal S}}+\lambda\I_{|{\cal S}|}\right)^{-1}\K_{:,\calS}^{\T}}.
\]

Eq.~\eqref{eq:J_lt} allows us, via the kernel trick, to perform
experimental design for learning of nonlinear models defined using
high dimensional feature maps. Denote our unlabeled pool of data by
$\z_{1},\dots,\z_{m}\in\R^{D}$, and that we are using a feature map
$\phi:\R^{d}\to{\cal H}$ where ${\cal H}$ is some Hilbert space
(e.g., $\calH=\R^{d}$), i.e. the regression function is $y(\z)=\dotprodH{\phi(\z)}{\w}$.
We can then envision the pool of data to be defined by $\x_{j}=\phi(\z_{j})$,~$j=1,\dots,m$.
If we assume we have a kernel function $k:\R^{D}\times\R^{D}\to\R^{D}$
such that $k(\x,\z)=\dotprodH{\phi(\x)}{\phi(\z)}$ then $J_{\lambda,t}(\calS)$
can be computed without actually forming $\x_{1},\dots,\x_{m}$ since
entries in $\matK$ can be computed via $k$. If ${\cal H}$ is the
Reproducing Kernel Hilbert Space of $k$ then this is exactly the
setting that corresponds to kernel ridge regression (possibly with
a zero ridge term).

\subsection{\label{subsec:greedy-algorithm}Greedy Algorithm}

We now propose our algorithm for overparameterized experimental design,
which is based on greedy minimization of $J_{\lambda,t}({\cal S})$.
Greedy algorithms have already been shown to be effective for classical
experimental design \cite{yu2006active,avron2013faster,chamon2017approximate},
and it is reasonable to assume this carries on to the overparameterized
case.

Our greedy algorithm proceeds as follows. We start with ${\cal S}^{(0)}=\emptyset$,
and proceed in iteration. At iteration $j$, given selected samples
${\cal S}^{(j-1)}\subset[m]$ the greedy algorithm finds the index
$i^{(j)}\in[m]-{\cal S}^{(j-1)}$ that minimizes $J_{\lambda,t}\left({\cal S}^{(j-1)}\cup\{i^{(j)}\}\right).$
We set $\calS^{(j)}\gets{\cal S}^{(j-1)}\cup\{i^{(j)}\}$. We continue
iterating until ${\cal S}^{(j)}$ reaches its target size and/or $J_{\lambda,t}({\cal S})$
is small enough.

The cost of iteration $j$ in a naive implementation is $O\left(\left(m-j\right)\left(mj^{2}+j^{3}\right)\right)$.
Through careful matrix algebra, the cost of iteration $j$ can be
reduced to $O((m-j)(mj+j^{2}))=O(m^{2}j)$ (since $j\leq m$). The
cost of finding a design of size $n$ is then $O(m^{2}(n^{2}+D))$
assuming the entire kernel matrix $\matK$ is formed at the start
and a single evaluation of $k$ takes $O(D)$. Details are delegated
to Appendix~\ref{sec:alg-details}.

\section{Single Shot Deep Active Learning\label{sec:Single-shot-active-deep}}

There are few ways in which our proposed experimental design algorithm
can be used in the context of deep learning. For example, one can
consider a sequential setting where current labeled data are used
to create a linear approximation via the Fisher information matrix
at the point of minimum loss~\cite{sourati2018active}. However,
such a strategy falls under the heading of Sequential Experimental
Design, and, as we previously stated, in this paper we focus on single
shot active learning, i.e. no labeled data is given neither before
acquisition nor during acquisition \cite{yang2019single}.

In order to design an algorithm for deep active learning, we leverage
a recent breakthrough in theoretical analysis of deep learning - the
Neural Tangent Kernel (NTK) \cite{jacot2018neural,lee2019wide,arora2019exact}.
A rigorous exposition of the NTK is beyond the scope of this paper,
but a short and heuristic explanation is sufficient for our needs.

Consider a DNN, and suppose the weights of the various layers can
be represented in a vector $\vec{\theta}\in\R^{d}$. Given a specific
$\vec{\theta}$, let $f_{\vec{\theta}}(\cdot)$ denote the function
instantiated by that network when the weights are set to $\vec{\theta}$.
The crucial observation is that when the network is wide (width in
convolutional layers refers to the number of output channels) enough,
we use a quadratic loss function (i.e., $l(f_{\vec{\theta}}(\x),y)=\nicehalf(f_{\vec{\theta}}(\x)-y)^{2}$),
and the initial weights $\vec{\theta}_{0}$ are initialized randomly
in a standard way, then when training the DNN using gradient descent,
the vector of parameters $\vec{\theta}$ stays almost fixed. Thus,
when we consider $\vec{\theta}_{1},\vec{\theta}_{2},\dots$ formed
by training, a first-order Taylor approximation is:
\[
f_{\vec{\theta}_{k}}(\x)\approx f_{\vec{\theta}_{0}}(\x)+\nabla_{\vec{\theta}}f_{\vec{\theta_{0}}}(\x)^{\T}(\vec{\theta}_{k}-\vec{\theta}_{0})
\]
Informally speaking, the approximation becomes an equality in the
infinite width limit. The Taylor approximation implies that if we
further assume that $\vec{\theta}_{0}$ is such that $f_{\vec{\theta}_{0}}(\x)=0$,
the learned prediction function of the DNN is well approximated by
the solution of a kernel regression problem with the (Finite) Neural
Tangent Kernel, defined as
\[
k_{f,\vec{\theta}_{0}}(\x,\z)\coloneqq\nabla_{\vec{\theta}}f_{\vec{\theta_{0}}}(\x)^{\T}\nabla_{\vec{\theta}}f_{\vec{\theta_{0}}}(\z)
\]
We remark that there are few simple tricks to fulfill the requirement
that $f_{\vec{\theta}_{0}}(\x)=0$. 

It has also been shown that under certain initialization distribution,
when the width goes to infinity, the NTK $k_{f,\vec{\theta}_{0}}$
converges in probability to a deterministic kernel $k_{f}$ - the
\emph{infinite NTK}. Thus, in a sense, instead of training a DNN on
a finite width network, we can take the width to infinity and solve
a kernel regression problem instead. 

Although, it is unclear whether the infinite NTK can be an effective
alternative to DNNs in the context of inference, one can postulate
that it can be used for deep active learning. That is, in order to
select examples to be labeled, use an experimental design algorithm
for kernel learning applied to the corresponding NTK. Specifically,
for single shot deep active learning, we propose to apply the algorithm
presented in the previous section to the infinite NTK. In the next
section we present preliminary experiments with this algorithm. We
leave theoretical analysis to future research.

\section{Empirical Evaluation \label{sec:Experiments}}

\subsection{Experimental Parameters Exploration and Comparison to Transductive
Experimental Design}

\begin{figure*}
\noindent\begin{minipage}[t]{1\columnwidth}%
\begin{center}
\includegraphics[width=1\textwidth]{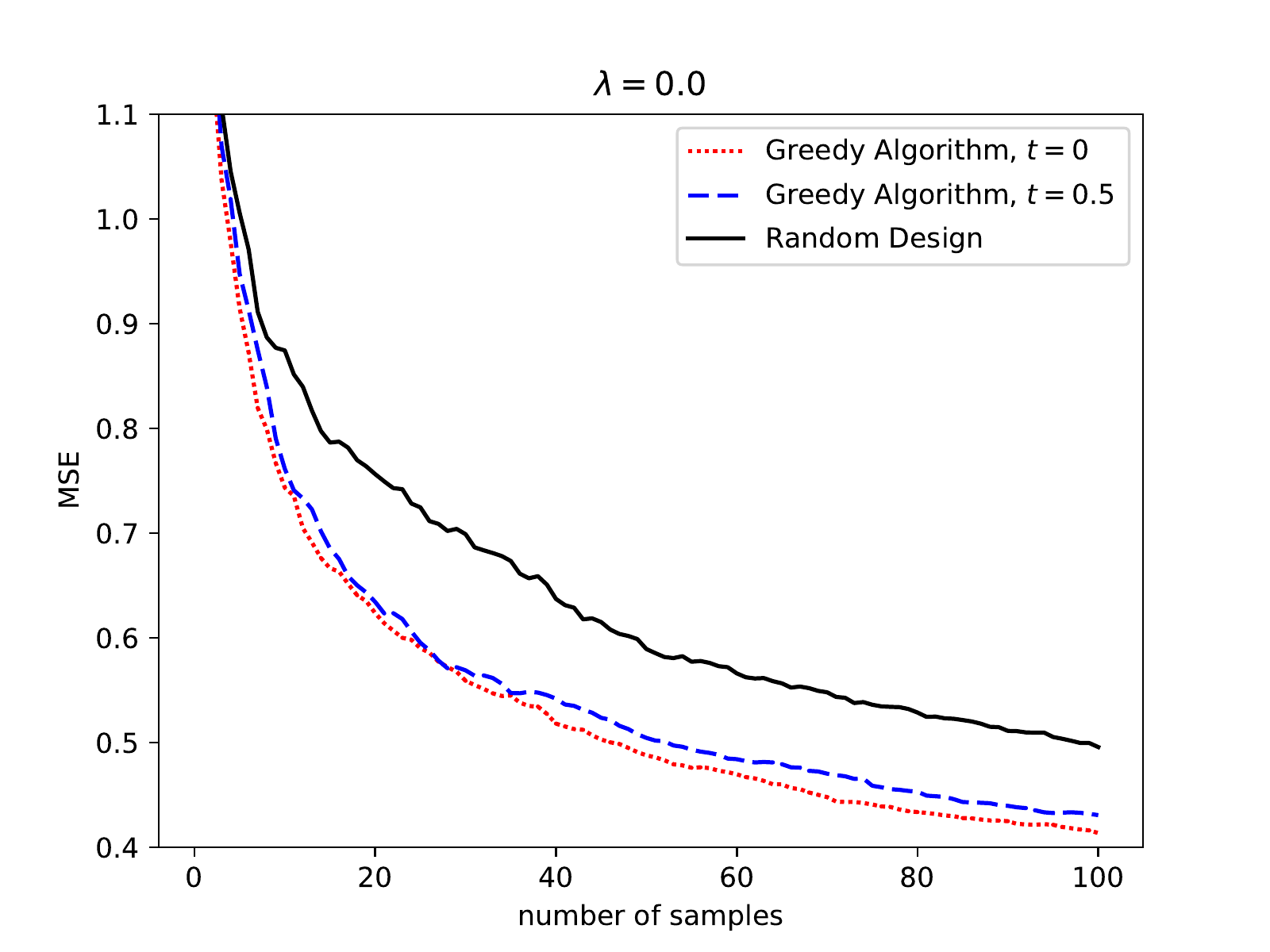}
\par\end{center}%
\end{minipage}\hspace*{\fill}%
\noindent\begin{minipage}[t]{1\columnwidth}%
\begin{center}
\includegraphics[width=1\textwidth]{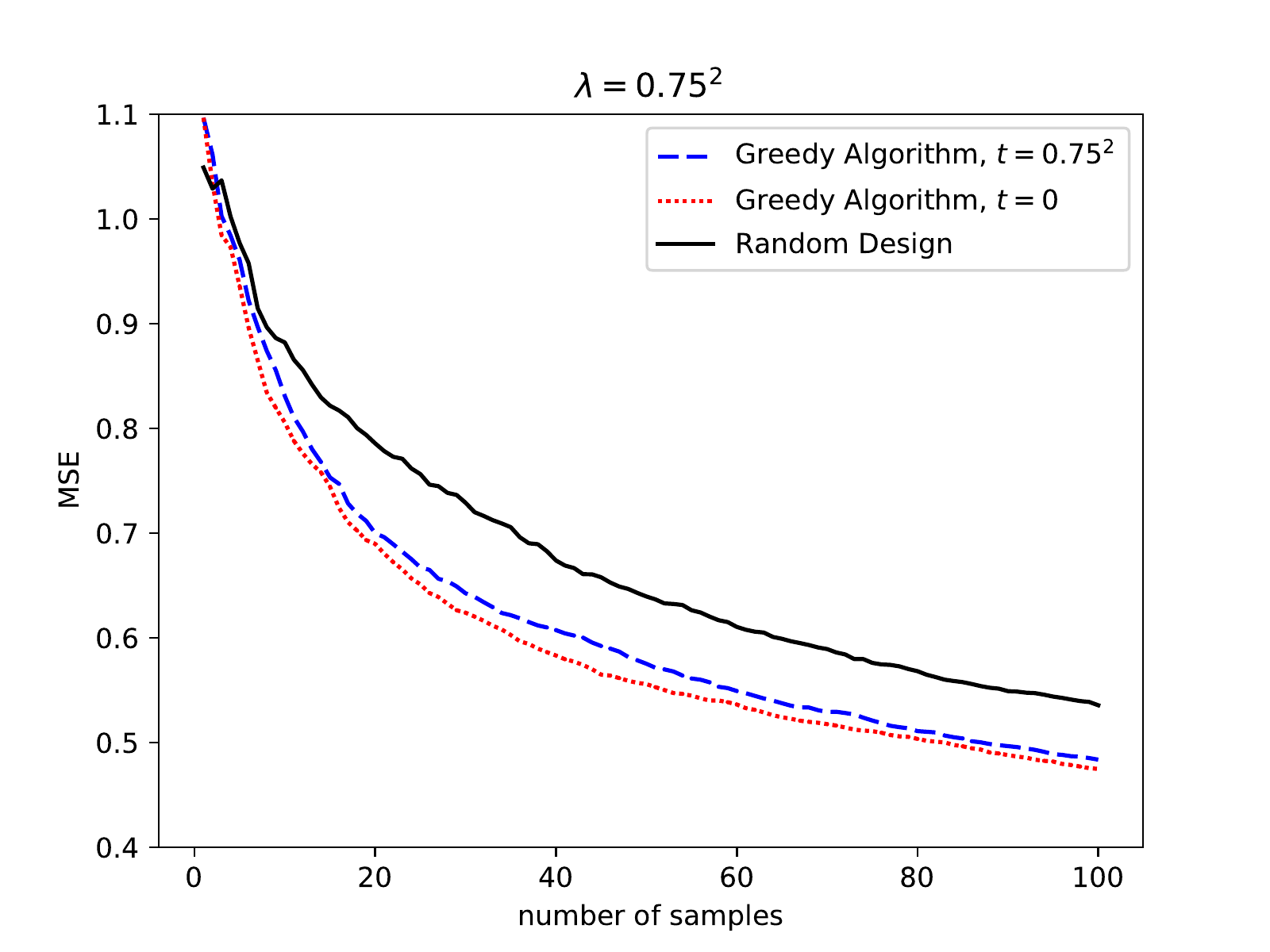}
\par\end{center}%
\end{minipage}
\centering{}\caption{\label{fig:Optimal-design-for}Kernel regression experiments on MNIST.}
\end{figure*}

\begin{wrapfigure}{i}{2\columnwidth}%
\end{wrapfigure}%

In this subsection we report a set of experiments on a kernel ridge
regression setup (though in one experiment we set the ridge term to
$0$, so we are using interpolation). We use the MNIST handwriting
dataset \cite{lecun2010mnist}, where the regression target response
was computed by applying one-hot function on the labels 0-9. Nevertheless,
we still measure the MSE, and do not use the learnt models as classifiers.
We use the RBF kernel $k(\x,\z)=\exp(-\gamma\TNormS{\x-\z})$ with
parameter $\gamma=\nicefrac{1}{784}$. From the dataset, we used the
standard test set of 10000 images and selected randomly another 10000
images from the rest of the 60000 images as a pool. We used our proposed
greedy algorithm to select a training set of sizes $1$ to $100$.
We use two values of $\lambda$: $\lambda=0$ (interpolation), and
$\lambda=0.75^{2}$. The optimal $\lambda$ according to cross validation
was the smallest we checked so we just used $\lambda=0$. However,
in some cases having a $\lambda>0$ is desirable from a computational
perspective, e.g. it caps the condition number of the kernel matrix,
making the linear system easier to solve. Furthermore, in real world
scenarios, oftentimes we do not have any data before we start to acquire
labels, and if we do, it is not always distributed as in the test
data, so computing the optimal $\lambda$ can be a challenging.

Results are reported in Figure~\ref{fig:Optimal-design-for}. The
left panel show the results for $\lambda=0$. We report results for
$t=0$ and $t=0.5$. The choice of $t=0$ worked better. Kernel models
with the RBF kernel are highly overparameterized (the hypothesis space
is infinite dimensional), so we expect the MSE to be bias dominated,
in which case a small $t$ (or $t=0$) might work best. Recall that
the option of $\lambda=t=0$ is equivalent to the Column Subset Selection
Problem, is the limit case of transductive experimental design~\cite{yu2006active},
and can be related to the coreset approach (specifically \cite{sener2017active}).

The case of $\lambda=0.75^{2}$ is reported in the right panel of
Figure~\ref{fig:Optimal-design-for}. We tried $t=0$ and $t=\lambda=0.75^{2}$.
Here too, using a purely bias oriented objective (i.e., $t=0$) worked
better. Note that this is in contrast with classical OED which use
variance oriented objectives. The choice of $t=\lambda$ worked well,
but not optimally. In general, in the reported experiments, and other
experiments conducted but not reported, it seems that the choice of
$t=\lambda$, which is, as we have shown in this paper, equivalent
to transductive experimental design, usually works well, but is not
optimal.

\subsection{Transductive vs $\bar{\psi}_{\lambda,0}$ Criterion (i.e., variance-oriented
vs. bias-oriented designs)}

\begin{table}
\caption{\label{tab:t_eq_0-vs}$\bar{\psi}_{\lambda,0}$ vs $\text{\ensuremath{\bar{\psi}_{\lambda,\lambda}}}$
on UCI datasets. We generated designs on 112 classification datasets.
Each cell details the number of datasets in which that selection of
$t$ was clearly superior to the other possible choice, or the same
(for the \textquotedblleft SAME\textquotedblright{} column).}

\vphantom{}
\centering{}%
\begin{tabular}{>{\centering}p{4em}>{\centering}p{4em}>{\centering}p{4em}>{\centering}p{4em}}
\textbf{$\lambda$} & \textbf{$\bar{\psi}_{\lambda,\lambda}$ is better} & \textbf{$\bar{\psi}_{\lambda,0}$ is better} & \textbf{SAME}\tabularnewline
\hline 
 &  &  & \tabularnewline
0.001 & 5 & 8 & 99\tabularnewline
0.01 & 7 & 9 & 96\tabularnewline
0.1 & 16 & 16 & 80\tabularnewline
1.0 & 21 & \textcolor{red}{43} & 48\tabularnewline
10.0 & 19 & \centering{}\textcolor{red}{68} & 25\tabularnewline
\end{tabular}
\end{table}

$\bar{\psi}_{\lambda,0}$ and $\bar{\psi}_{\lambda,\lambda}$ are
simplified version of $\bar{\psi}_{\lambda,t}$ criterion. Our conjecture
is that in the overparameterized regime $\bar{\psi}_{\lambda,0}$
is preferable, at least for relatively large $\lambda$. Table \ref{tab:t_eq_0-vs}
empirically supports our conjecture. In this experiment, we performed
an experimental design task on 112 classification datasets from UCI
database (similar to the list that was used by \cite{arora2019harnessing}
). Learning is performed using kernel ridge regression with standard
RBF kernel. We tried different values of $\lambda$ and checked which
criterion brings to a smaller classification error on a test set when
selecting 50 samples. Each entry in Table \ref{tab:t_eq_0-vs} counts
how many times $\bar{\psi}_{\lambda,\lambda}$ , won $\bar{\psi}_{\lambda,0}$
won or the error was the same. We consider an equal error when the
difference is less the 5\%.

\subsection{Deep Active Learning}

\begin{figure}
\noindent\begin{minipage}[c]{1\columnwidth}%
\begin{center}
\includegraphics[width=1\textwidth]{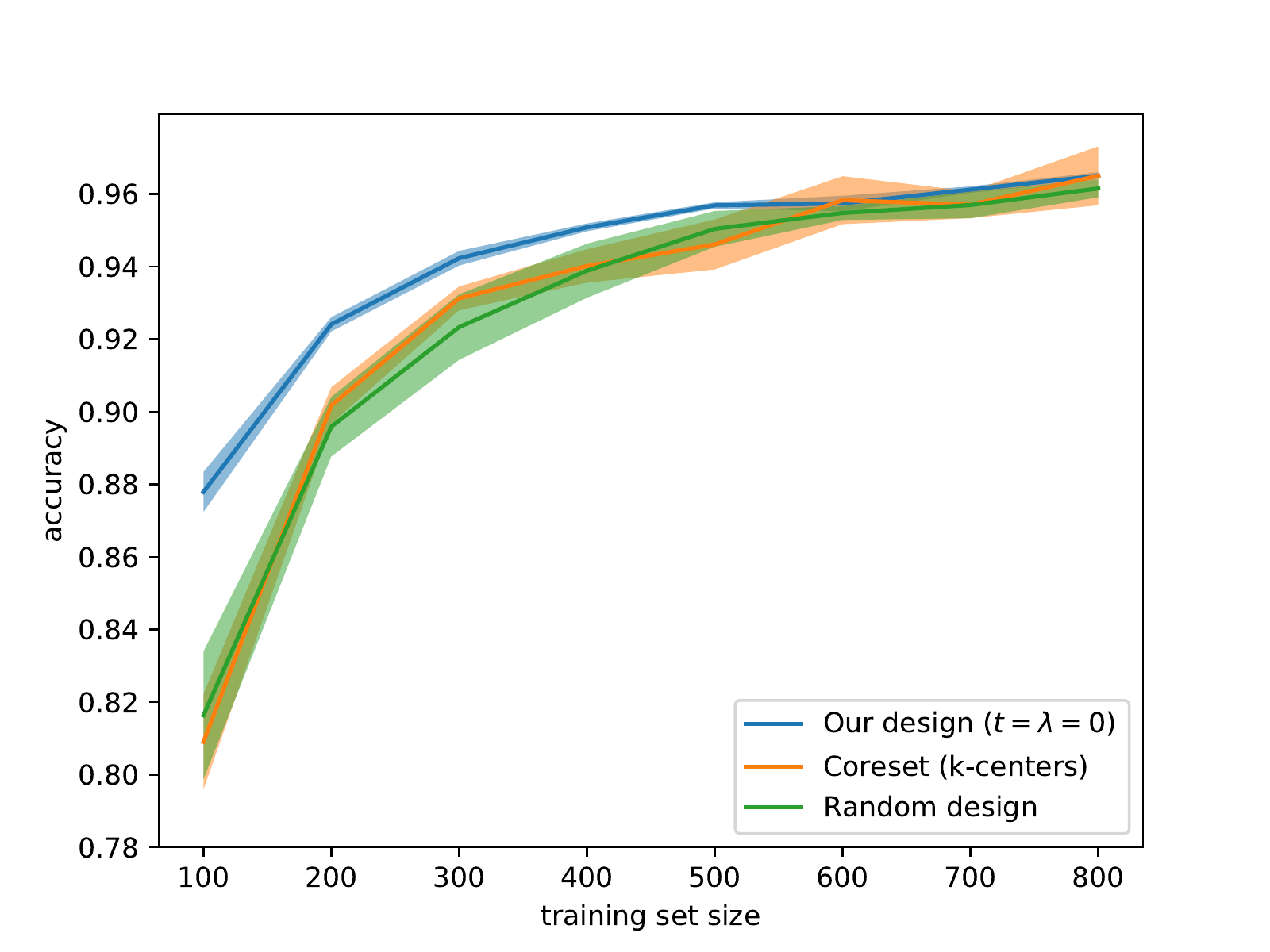}
\par\end{center}%
\end{minipage}

\caption{\label{fig:optimal-design-with-wide-lenet}Single shot active learning
with Wide-LeNet5 model on MNIST.}
\end{figure}

Here we report preliminary experiments with the proposed algorithm
for single shot deep active learning (Section~\ref{sec:Single-shot-active-deep}).
We used the MNIST dataset, and used the square loss for training.
As for the network architecture, we used a version of LeNet5 \cite{lecun1998gradient}
that is widen by a factor of 8. we refer to this network as ``Wide-LeNet5''.

The setup is as follows. We use Google's open source neural tangents
library \cite{neuraltangents2020} to compute Gram matrix of the infinite
NTK using 59,940 training samples (we did not use the full 60,000
training samples due to batching related technical issues). We then
incrementally selected greedy designs of up to 800 samples using three
methods: (a) the algorithm proposed in Section~\ref{subsec:greedy-algorithm}
withparameters to $\lambda=t=0$; (b) \emph{k-centers algorithm} \cite{wolf2011facility}
that finds an approximately optimal coverset (coreset)\footnote{To achieve a reasonable result with the k-centers algorithm we needed
to replace the greedy selection of the next sample according to the
maximum score with a random sampling according to probability proportional
to the score.}; (c) random selection. We now trained the original neural network
with different design sizes, each design with five different random
initial parameters. Learning was conducted using SGD, with fixed learning
rate of 0.1, batch size of 128, and no weight decay. Instead of counting
epochs, we simply capped the number of SGD iterations to be equivalent
to $20$ epochs of the full training set. We computed the accuracy
of the model predictions on 9963 test-set samples (again, due to technical
issues related to batching).

Figure \ref{fig:optimal-design-with-wide-lenet} reports the mean
and standard deviation (over the parameters initialization) of the
final accuracy. We see a consistent advantage in terms of accuracy
for designs selected via our algorithm, though as expected the advantage
shrinks as the training size increase. Notice, that comparing the
accuracy of our design with 400 training samples, random selection
required as many as 600 for Wide-LeNet5 to achieve the same accuracy.

\textcolor{black}{Two remarks are in order. First, to prevent overfitting
and reduce computational load, at each iteration of the greedy algorithm
we computed the score for only on a subset of 2000 samples from the
pool. Second, to keep the experiment simple we refrained from using
mechanisms that ensure $f_{\vec{\theta}_{0}}=0$.}

\subsection{Experiment: Single Shot Active Learning for Narrow Networks}

In Figure \ref{fig:Wide-LeNet5-vs-LeNet5} we compare the result of
our method on LeNet5 with the result of our method on Wide-LeNet5.
We see that while the result on the wide version are generally better,
both for random designs and our design, our method brings a consistent
advantage over random design. In both the narrow and the wide versions
it requires about 600 training samples for the random design to achieve
the accuracy achieved using our algorithm with only 400 training samples.

The parameters used by our algorithm to select the design are $\lambda=t=0$.
For the network training we used SGD with batch size 128, leaning
rate 0.1 and no regularization. The SGD number iterations is equivalent
to $20$ epochs of the full training set.

\begin{figure}
\noindent\begin{minipage}[t]{1\columnwidth}%
\begin{center}
\includegraphics[width=1\columnwidth]{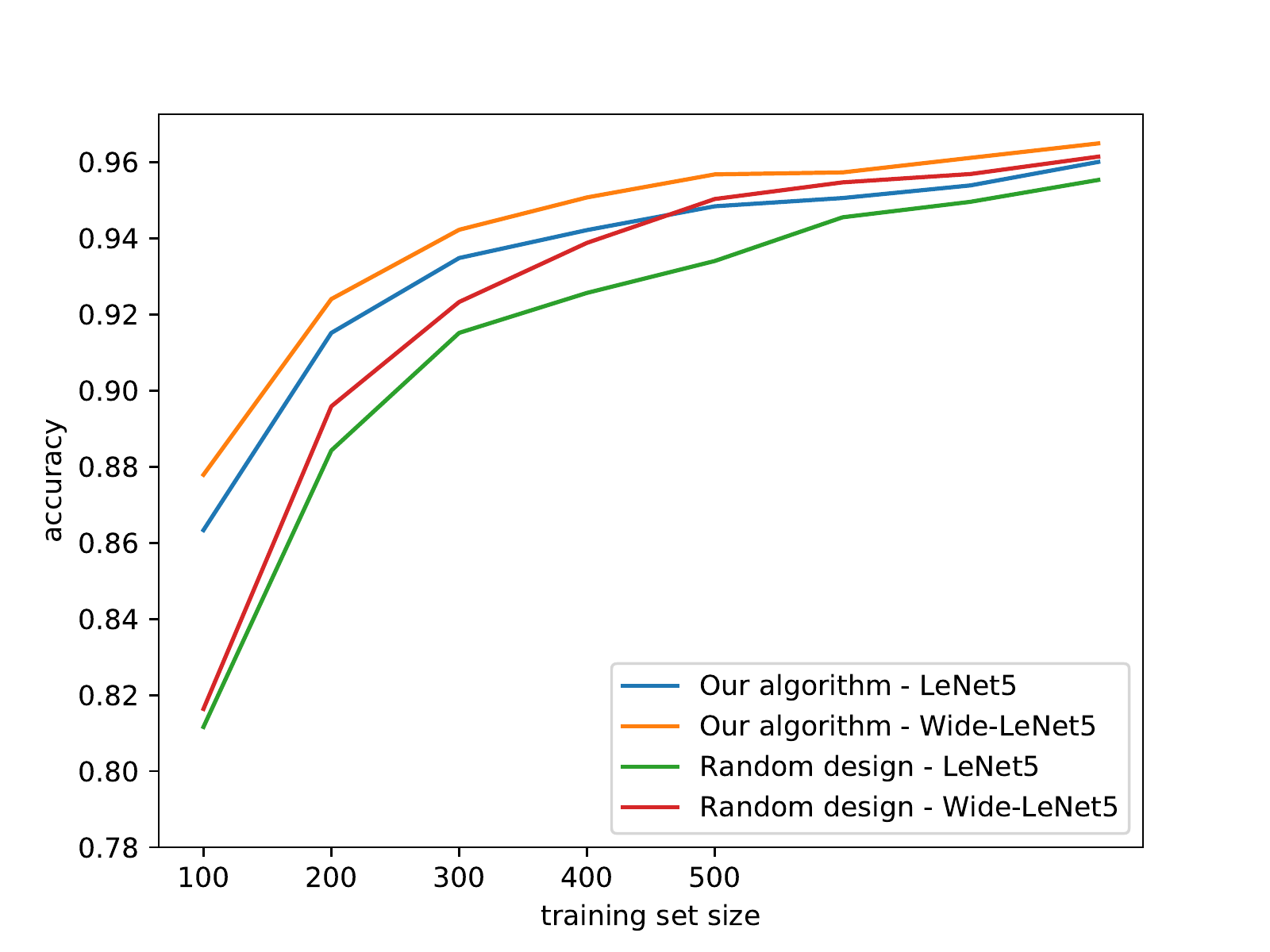}
\par\end{center}%
\end{minipage}

\caption{Wide-LeNet5 vs LeNet5 \label{fig:Wide-LeNet5-vs-LeNet5}}
\end{figure}

\subsection{Sequential vs Single Shot Active Learning}

While in this work focus on the single shot active learning, an interesting
question is how does it compare to sequential active learning. In
sequential active learning we alternate between a model improving
step and a step of new labels acquisition,. This obviously gives an
advantage to sequential active learning over single shot active learning,
as the latter is a restricted instance of the former.

As we still do not have a sequential version of our algorithm to compare
with, we chose to experimentally compare our single shot algorithm
with the classical method of \emph{uncertainty sampling} \cite{bartlett2019benign}.
This method has proved to be relatively efficient for neural networks
\cite{gal2017deep}. Uncertainty sampling based active learning requires
computing the uncertainty of the updated model regarding each sample
in the pool. As such, this approach is sequential by nature.

Usually uncertainty sampling is derived in connection to the cross
entropy since in that case the network output after the $\mathrm{softmax}$
layer can be interpreted as a probability estimation of $y=i$ given
$\x$, which we symbolize as $p_{i}(\x)$. The uncertainty score (in
one common version) is then given by 
\[
1-\max_{i\in[L]}p_{i}(\x).
\]
Because we use the square lose, we need to make some adaptation for
the way of $p_{i}(\x)$ is computed. Considering the fact that the
square loss is an outcome of a \emph{maximum likelihood }model that
given $\x$ assumes $\y\sim\Normal(f(\x),\I_{L})$, it make sense
to use 
\[
p_{i}(\x)=(2\pi)^{-\frac{L}{2}}e^{-\frac{1}{2}\TNormS{\y_{i}-f(\x)}},
\]
where $\y_{i}$ is the $\mathrm{onehot}$ vector of $i$.

\begin{figure}
\noindent\begin{minipage}[t]{1\columnwidth}%
\begin{center}
\includegraphics[width=1\columnwidth]{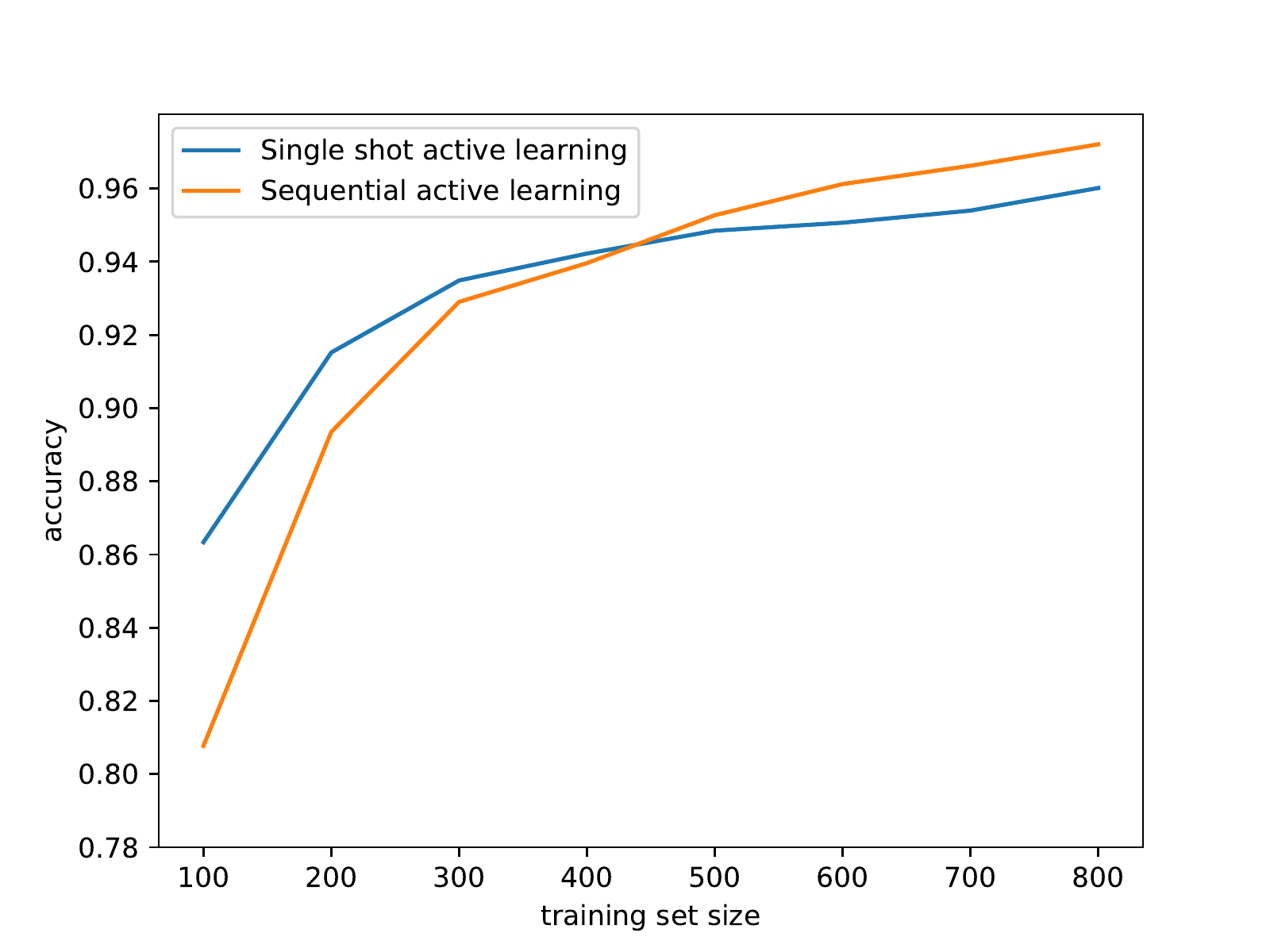}
\par\end{center}%
\end{minipage}

\caption{Single shot active learning vs sequential active learning. MNIST and
(standard) LeNet5\label{fig:single-shot_vs_sequential}}
\end{figure}

Figure \ref{fig:single-shot_vs_sequential} shows a comparison between
the accuracy achieved with our single shot algorithm and the sequential
active learning on MNIST with LeNet5. The acquisitions batch size
of the sequential active learning were set to 100. Our algorithm ran
with $\lambda=t=0$. For the network training we used SGD with batch
size 128, leaning rate 0.1 and no l2 regularization. The SGD number
iterations is equivalent to $20$ epochs of the full train set.

Initially, our selection procedure shows a clear advantage. However,
once the training set grows large enough, the benefit of a sequential
setup starts to kick-in, the sequential algorithm starts to show superior
results. This experiment motivates further development of sequential
version of our algorithm.

\appendix

\section{\label{app:proofs}Missing Proofs}

\subsection{Proof of Proposition~\ref{prop:expected-risk}}
\begin{proof}
Let us write 
\[
\epsilon\coloneqq\left[\begin{array}{c}
\epsilon_{1}\\
\vdots\\
\epsilon_{n}
\end{array}\right]
\]
so
\[
\y=\matX\w+\epsilon\,.
\]
Thus, 
\[
\hat{\w}_{\lambda}=\matX_{\lambda}^{\pinv}\y=\matX_{\lambda}^{\pinv}\matX\w+\matX_{\lambda}^{\pinv}\epsilon=\matM_{\lambda}^{\pinv}\matM\w+\matX_{\lambda}^{\pinv}\epsilon
\]
and 
\[
\x^{\T}\w-\x^{\T}\hat{\w}_{\lambda}=\x^{\T}(\matI_{d}-\matM_{\lambda}^{\pinv}\matM)\w+\x^{\T}\matX_{\lambda}^{\pinv}\epsilon
\]
For brevity we denote $\matP_{\perp\matX}^{\lambda}=\matI_{d}-\matM_{\lambda}^{\pinv}\matM$.
Note that this is not really a projection, but rather (informally)
a ``soft projection''. So:
\begin{multline*}
(\x^{\T}\w-\x^{\T}\hat{\w}_{\lambda})^{2}=\w^{\T}\matP_{\perp\matX}^{\lambda}(\x\x^{\T})\matP_{\perp\matX}^{\lambda}\w\\
+\w^{\T}\matP_{\perp\matX}^{\lambda}(\x\x^{\T})\matX_{\lambda}^{\pinv}\epsilon+\epsilon^{\T}(\matX_{\lambda}^{\pinv})^{\T}(\x\x^{\T})\matX_{\lambda}^{\pinv}\epsilon
\end{multline*}
Finally,
\begin{flalign}
 & \Expect{R(\hat{\w}_{\lambda})}=\ExpectC{\x,\epsilon}{\left(\x^{\T}\w-\x^{\T}\hat{\w}_{\lambda}\right)^{2}}\label{eq:excess_risk}\\
 & =\ExpectC{\epsilon}{\ExpectC{\x}{\left(\x^{\T}\w-\x^{\T}\hat{\w}_{\lambda}\right)^{2}\,\,|\,\,\epsilon}}\nonumber \\
 & =\mathbb{E_{\epsilon}}\Big[\mathbb{E}_{\x}\Big[\w^{\T}\matP_{\perp\matX}^{\lambda}(\x\x^{\T})\matP_{\perp\matX}^{\lambda}\w+\ \w^{\T}\matP_{\perp\matX}^{\lambda}(\x\x^{\T})\matX_{\lambda}^{\pinv}\epsilon\nonumber \\
 & \qquad\qquad\qquad\qquad\qquad\qquad+\epsilon^{\T}(\matX_{\lambda}^{\pinv})^{\T}(\x\x^{\T})\matX_{\lambda}^{\pinv}\epsilon\,\,|\,\,\epsilon\Bigr]\Bigr]\nonumber \\
 & =\mathbb{E}_{\epsilon}\Big[\w^{\T}\matP_{\perp\matX}^{\lambda}\matC_{\rho}\matP_{\perp\matX}^{\lambda}\w+\w^{\T}\matP_{\perp\matX}^{\lambda}\matC_{\rho}\matX_{\lambda}^{\pinv}\epsilon\nonumber \\
 & \qquad\qquad\qquad\qquad\qquad\qquad\qquad\quad+\epsilon^{\T}(\matX_{\lambda}^{\pinv})^{\T}\matC_{\rho}\matX_{\lambda}^{\pinv}\epsilon\Big]\nonumber \\
 & =\w^{\T}\matP_{\perp\matX}^{\lambda}\matC_{\rho}\matP_{\perp\matX}^{\lambda}\w+\sigma^{2}\Trace{(\matX_{\lambda}^{\pinv})^{\T}\matC_{\rho}\matX_{\lambda}^{\pinv}}\nonumber \\
 & =\TNormS{\matC_{\rho}^{\nicehalf}\matP_{\perp\matX}^{\lambda}\w}+\sigma^{2}\Trace{\matC_{\rho}\matX_{\lambda}^{\pinv}(\matX_{\lambda}^{\pinv})^{\T}}\nonumber \\
 & =\TNormS{\mat C_{\rho}^{\nicehalf}\left(\mat I-\Mlambda^{+}\mat M\right)\w}+\sigma^{2}\Trace{\mat C_{\rho}\Mlambda^{+^{2}}\mat M}\nonumber \\
\nonumber 
\end{flalign}
\end{proof}

\subsection{Proof of Theorem~\ref{theorem:argmin_continues}}
\begin{proof}
Suppose $\bar{\w}\in\overline{\lim}_{\lambda\to\bar{\lambda}}\argmin_{\w}f\left(\w,\lambda\right)$.
The implies that there exits $\lambda_{n}\to\bar{\lambda}$ such that
$\w_{n}\in\argmin_{\w}f\left(\w,\lambda_{n}\right)$ and $\w_{n}\to\bar{\w}$.
From the continuity of $f$ we have that $f\left(\w_{n},\lambda_{n}\right)\to f\left(\bar{\w},\bar{\lambda}\right)$
. Now suppose for the sake of contradiction that $\bar{\w}\notin\argmin_{\w}f\left(\w,\bar{\lambda}\right)$
. So there is $\u$ such that $f(\u,\bar{\lambda})<f\left(\bar{\w},\bar{\lambda}\right)$.
From the continuity of $f$ in $\lambda$ there is $n_{0}$ such that
for all $n>n_{0}$ $f(\u,\lambda_{n})<f\left(\bar{\w},\bar{\lambda}\right)$.
Then from the continuity of $f$ in $\w$, and $\w_{n}\to\bar{\w}$,
for sufficiently large $n$, $f\left(\w_{n},\lambda_{n}\right)>f(\u,\lambda_{n})$,
which contradicts $\w_{n}\in\argmin_{\w}f\left(\w,\lambda_{n}\right)$.
\end{proof}

\section{\label{sec:alg-details}Details on the Algorithm}

We discuss the case of $\lambda=0$. The case of $\lambda>0$ requires
some more careful matrix algebra, so we omit the details.

Let us define 
\[
\matA_{j}\coloneqq\matK_{\calS^{(j)},\calS^{(j)}}^{-1},\quad\matB_{j}\coloneqq\matK_{:,\calS^{(j)}}^{\T}\matK_{:,\calS^{(j)}}
\]
and note that 
\begin{alignat*}{1}
J_{\lambda,t}(\calS^{(j)}) & =-\Trace{\matB_{j}(\matA_{j}-t\matA_{j}^{2})}.
\end{alignat*}
We also denote by $\tilde{\matA}_{j}$ and $\tilde{\matB}_{j}$ the
matrices obtained from $\matA_{j}$ and $\matB_{j}$ (respectively)
by adding a zero row and column.

Our goal is to efficiently compute $J_{\lambda,t}(\calS^{(j-1)}\cup\{i\})$
for any $i\in[m]-{\cal S}^{(j-1)}$ so find $i^{(j)}$ and form $\calS^{(j)}$.
We assume that at the start of iteration $j$ we already have in memory
$\matA_{j-1}$ and $\matB_{j-1}$. We show later how to efficiently
update $\matA_{j}$ and $\matB_{j}$ once we have found $i^{(j)}$.
For brevity, let us denote 
\begin{gather*}
\calS_{i}^{(j)}\coloneqq\calS^{(j-1)}\cup\{i\},\quad\matA_{ji}\coloneqq\matK_{\calS_{i}^{(j)},\calS_{i}^{(j)}}^{-1},\\
\matB_{ji}\coloneqq\matK_{:,\calS_{i}^{(j)}}^{\T}\matK_{:,\calS_{i}^{(j)}}
\end{gather*}
Let us also define
\begin{gather*}
\matC_{j-1}\coloneqq\tilde{\matB}_{j-1}\tilde{\matA}_{j-1},\quad\matD_{j-1}\coloneqq\tilde{\matB}_{j-1}\tilde{\matA}_{j-1}^{2},\\
\matE_{j-1}\coloneqq\tilde{\matA}_{j-1}^{2}
\end{gather*}
Again, we assume that at the start of iteration $j$ we already have
in memory $\matC_{j-1}$, $\matD_{j-1}$ and $\matE_{j-1}$, and show
how to efficiently update these matrices.

Let 
\[
\matW_{ji}\coloneqq\left[\begin{array}{cc}
0_{j-1} & \matK_{:,\calS^{(j-1)}}^{\T}\matK_{:,i}\\
\matK_{:,i}^{\T}\matK_{:,\calS^{(j-1)}} & \matK_{:,i}^{\T}\matK_{:,i}
\end{array}\right]
\]
and note that 
\[
\matB_{ji}=\tilde{\matB}_{j-1}+\matW_{ji}.
\]
Also important is the fact that $\matW_{ji}$ has rank 2 and that
finding the factors takes $O(mj)$ discounting the cost of computing
columns of $\matK$. Next, let us denote 
\[
r_{ji}=\frac{1}{(\matK_{ii}-\matK_{\calS^{(j)},i}^{\T}\matA_{j-1}\matK_{\calS^{(j)},i})}
\]
and 
\[
\matQ_{ji}\coloneqq r_{ji}\cdot\left[\begin{array}{cc}
\matA_{j-1}\matK_{\calS^{(j)},i}\matK_{\calS^{(j)},i}^{\T}\matA_{j-1}^{-1} & -\matA_{j-1}\matK_{\calS^{(j)},i}\\
-\matK_{\calS^{(j)},i}^{\T}\matA_{j-1} & 1
\end{array}\right]
\]
A well known identity regarding Schur complement implies that 
\[
\matA_{ji}=\tilde{\matA}_{j-1}+\matQ_{ji}
\]
Also important is the fact that $\matQ_{ji}$ has rank 2 and that
finding the factors takes $O(j^{2})$ discounting the cost of computing
entries of $\matK$.

So
\begin{flalign*}
 & J_{\lambda,t}(\calS_{i}^{(j)})=-\Trace{\matB_{ji}(\matA_{ji}-t\matA_{ji}^{2})}\\
 & =-\Trace{(\tilde{\matB}_{j-1}+\matW_{ji})(\tilde{\matA}_{j-1}+\matQ_{ji}-t(\tilde{\matA}_{j-1}+\matQ_{ji})^{2})}\\
 & =-\mathbf{Tr}\left((\tilde{\matB}_{j-1}+\matW_{ji})(\tilde{\matA}_{j-1}+\matQ_{ji}\right.)\\
 & \ \ \left.-t(\tilde{\matB}_{j-1}+\matW_{ji})(\tilde{\matA}_{j-1}^{2}+\matQ_{ji}^{2}+\tilde{\matA}_{j-1}\matQ_{ji}+\matQ_{ji}\tilde{\matA}_{j-1})\right)\\
 & =-\Trace{\matC_{j-1}+\tilde{\matB}_{j-1}\matQ_{ji}+\matW_{ji}(\tilde{\matA}_{j-1}+\matQ_{ji})}\\
 & \qquad+t\Trace{\matD_{j-1}+\tilde{\matB}_{j}(\tilde{\matA}_{j-1}\matQ_{ji}+\matQ_{ji}\tilde{\matA}_{j-1}+\matQ_{ji}^{2})}\\
 & \qquad\quad+\Trace{\matW_{i}(\matE_{j-1}+\matQ_{ji}^{2}+\tilde{\matA}_{j-1}\matQ_{ji}+\matQ_{ji}\tilde{\matA}_{j-1})}
\end{flalign*}
Now, $\matC_{j-1}$ is already in memory so $\Trace{\matC_{j-1}}$
can be computed in $O(j)$, $\matQ_{ji}$ has rank 2 and $\tilde{\matB}_{j-1}$
is in memory so $\Trace{\tilde{\matB}_{j-1}\matQ_{ji}}$ can be compute
in $O(j^{2}),$ and $\matW_{ji}$ has rank 2 and $\tilde{\matA}_{j-1}$
is in memory so $\Trace{\matW_{i}(\tilde{\matA}_{j-1}+\matQ_{ji})}$
can be computed in $O(j^{2})$. Using a similar rationale, all the
other terms of $J_{\lambda,t}(\calS_{i}^{(j)})$ can also be computed
in $O(j)$ or $O(j^{2})$, and overall $J_{\lambda,t}(\calS_{i}^{(j)})$
can be computed in $O(j^{2})$. Thus, scanning for $i^{(j)}$ takes
$O((m-j)j^{2})$.

Once $i^{(j)}$ has been identified, we set 
\[
{\cal S}^{(j)}=\calS_{i^{(j)}}^{(j)},\quad\matA_{j}=\matA_{ji^{(j)}}=\tilde{\matA}_{j-1}+\matQ_{ji^{(j)}}
\]
 and 
\[
\matB_{j}=\matB_{ji^{(j)}}=\tilde{\matB}_{j-1}+\matW_{ji^{(j)}}.
\]
The last two can be computed in $O(j^{2})$ once we form $\matQ_{i^{(j)}}$
and $\matW_{i^{(j)}}$. Computing the factors of these matrices takes
$O(mj)$. As for updating $\matC_{j-1}$, we have
\[
\matC_{j}=\tilde{\matC}_{j-1}+\tilde{\matB}_{j-1}\matQ_{ji^{(j)}}+\matW_{ji^{(j)}}\tilde{\matA}_{j-1}+\matW_{ji^{(j)}}\matQ_{ji^{(j)}}
\]
where $\tilde{\matC}_{j-1}$ is obtained from $\matC_{j-1}$ be adding
a zero row and column. Since $\matC_{j-1}$ is in memory and both
$\matQ_{ji^{(j)}}$ and $\matW_{i^{(j)}}$ have rank $O(1)$, we can
compute $\matC_{j}$ is $O(j^{2})$. Similar reasoning can be used
to show that $\matD_{j}$ and $\matE_{j}$ can also be computed in
$O(j^{2})$.

Overall, the cost of iteration $j$ is 
\[
O((m-j)(mj+j^{2}))=O(m^{2}j)
\]
 (since $j\leq m$). The cost of finding a design of size $n$ is
\[
O(m^{2}(n^{2}+D))
\]
 assuming the entire kernel matrix $\matK$ is formed at the start
and a single evaluation of $k$ takes $O(D)$.

\section{\label{sec:Experiment-setup-of}Experimental Setup for Results Reported
in Figure \ref{fig:opening}}

First, $\w\in\R^{100}$ was sampled randomly from $\Normal(0,\I)$
. Then a pool (the set from which we later choose the design) of 500
samples and a test set of 100 samples were randomly generated according
to $\x\sim\Normal(0,\Sigma)$, $\epsilon\sim\Normal(0,\sigma^{2}\I)$
and $y=\x^{\T}\w+\epsilon$, where $\Sigma\in\R^{100\times100}$ is
diagonal with $\Sigma_{ii}=\exp(-\nicefrac{2.5i}{100})$, and $\sigma=0.2$.
We then created three incremental designs (training sets) of size
120 according to three different methods:
\begin{itemize}
\item Random design - at each iteration we randomly choose the next training
sample from the remaining pool.
\item Classical OED (variance oriented) - at each iteration we choose the
next training sample from the remaining pool with a greedy step that
minimizes the variance term in Eq. \eqref{eq:psibar_lambda-1}.
\item Overparameterized OED - at each iteration we chose the next training
sample from the remaining pool with a greedy step that minimizes Eq.
\eqref{eq:psibar_lambda-1}, with $\lambda=0$ and $t=\sigma^{2}$
.
\end{itemize}
With the addition of each new training sample we computed the new
MSE achieved on the test set with minimum norm linear regression.

\bibliographystyle{ieeetr}
\bibliography{bibtex}

\end{document}